\documentclass[11pt]{article}
\usepackage[margin=1in]{geometry}
\usepackage{amsmath,amssymb,amsthm}
\usepackage{booktabs}
\usepackage{threeparttable}
\usepackage{adjustbox}
\usepackage{graphicx}
\usepackage{float}
\usepackage{placeins}
\usepackage[protrusion=true,expansion=false]{microtype}
\usepackage[round,authoryear]{natbib}
\usepackage{authblk}      
\usepackage{setspace}     
\usepackage[colorlinks=true,
            linkcolor=blue,
            citecolor=blue,
            urlcolor=blue]{hyperref}
 
\let\originalincludegraphics\includegraphics
\renewcommand{\includegraphics}[2][]{%
  \IfFileExists{#2}{\originalincludegraphics[#1]{#2}}{%
    \fbox{\parbox[c][2in][c]{0.88\linewidth}{\centering
      Figure file to be supplied: \texttt{\detokenize{#2}}}}}}
 
\newcommand{\R}{\mathbb{R}}
\newcommand{\E}{\mathbb{E}}

\newcommand{\norm}[1]{\left\lVert #1 \right\rVert}

\DeclareMathOperator*{\argmax}{arg\,max}

\newtheorem{assumption}{Assumption}

\newtheorem{proposition}{Proposition}
\newtheorem{theorem}{Theorem}

\newtheorem{remark}{Remark}
 
\title{\bfseries A Deep Learning Model for Spatially Clustered Data
via Differentiable Cluster Assignment}
 
\author[1]{Kexuan Li\thanks{kexuan.li.77@gmail.com}}
\author[2]{Weidong Ma\thanks{weidong.ma@pennmedicine.upenn.edu}}
\affil[1]{Bristol Myers Squibb}
\affil[2]{University of Pennsylvania, 
Department of Biostatistics}

\begin{document}

\maketitle


\begin{abstract}
We consider nonparametric regression when the association between a response
and its covariates changes across an unknown partition of a spatial domain.
The proposed estimator learns the partition and the cluster-specific
regression functions jointly.  A neural network depending only on location
determines cluster membership, while separate neural networks describe the
covariate--response relationship within the clusters.  An annealed softmax
relaxation permits gradient-based estimation of the otherwise discrete
assignments.  Graph-Laplacian and occupancy penalties are used to discourage
fragmented regions and degenerate solutions.  We establish identifiability up
to label permutation, bound partition error under a margin condition, and
decompose prediction risk into regression and assignment components.  The
resulting rate agrees with that of an oracle estimator when the partition is
estimated sufficiently accurately.  Simulations show that joint estimation
is useful when regression surfaces change abruptly across spatial boundaries,
including settings with nonlinear effects, unequal region sizes, preferential
sampling, and spatially correlated errors. Finally, a real data analysis is provided to demonstrate the validity
and effectiveness of the proposed method..
\end{abstract}

\noindent\textit{Keywords:} deep learning; neural networks;
nonparametric regression; spatial heterogeneity; spatial partitioning.

\clearpage
\section{Introduction}\label{sec:introduction}

Spatial heterogeneity in a regression relationship means that the association
between a response and its predictors changes with location.  This phenomenon
is not confined to one application area.  Spatial variation has been documented
in the marginal prices of housing attributes \citep{bitter2007housing,
bhattacharjee2016housing}, in relationships between environmental variables
\citep{mu2018spatial}, in geographic disease risk and exposure--outcome
associations \citep{wakefield2014geographic}, and in species--environment
relationships \citep{doser2024guidelines}.  Thus, an identical change in a
covariate need not have the same implication at two locations.  A global
regression compresses these local relationships into a single average and can
therefore conceal scientifically or substantively important variation
\citep{fotheringham2002geographically,comber2023route}.

Most established approaches represent such variation as a smooth function of
space.  Geographically weighted regression fits a locally weighted model at
each location \citep{brunsdon1996gwr,fotheringham2002geographically}; extensions
allow the coefficients to vary at different spatial scales
\citep{comber2023route}.  Spatially varying coefficient process models instead
place stochastic-process priors on coefficient surfaces
\citep{gelfand2003spatial}, while spline methods estimate smooth coefficient
functions over regular or irregular domains \citep{mu2018spatial}.  These
methods offer flexible descriptions of gradual spatial change.  Their basic
structural premise is nevertheless continuity or local smoothness, which is
different from a setting in which the regression relationship is relatively
homogeneous within regions but changes at their boundaries.

Region-based methods address the latter structure directly.  In housing, for
example, submarkets have been delineated from spatial variation in hedonic
relationships rather than imposed solely from administrative geography
\citep{bhattacharjee2016housing}.  More generally,
\citet{sugasawa2021spatially} estimate clustered linear regression coefficients
using a Potts-type penalty that favors common labels at neighboring sites.
\citet{lin2022spatially} allow coefficients to vary smoothly within a cluster
and to change abruptly between adjacent clusters, combining spline
approximation with a concave fusion penalty on a spatial graph.  These methods
show why spatial clustering should be linked to the regression relationship,
rather than obtained from geographic proximity alone.  They also leave a
substantive gap: region-specific effects are represented by linear coefficients
or smooth varying-coefficient expansions, whereas many applications require a
fully nonlinear response surface with interactions among covariates.

Neural networks provide one route to such nonlinear spatial regression, and a
small but growing statistical literature has developed around this idea.
\citet{zammitmangion2022deep} use deep compositions of spatial warpings to model
nonstationary and anisotropic covariance structure.  \citet{wang2023deep}
construct a neural architecture for nonlinear endogenous and covariate effects
in a spatial autoregressive model.  Particularly relevant here,
\citet{li2023semiparametric} use a sparsely connected ReLU network to estimate
a nonparametric mean function under Gaussian spatial dependence and derive a
convergence rate that reflects both network complexity and the strength of that
dependence.  These contributions establish the usefulness of deep learning for
flexible mean or dependence structures in spatial data.  Their targets differ
from the present one: they do not estimate an unknown discrete spatial
partition together with a separate nonlinear regression function for each
region.

We consider precisely this problem.  Cluster membership is determined by an
assignment function of spatial location, and the conditional mean in each
cluster is represented by a neural network of the nonspatial covariates.  The
location-only restriction preserves the geographic meaning of a cluster: two
observations at the same site cannot receive different regional labels merely
because their covariates differ.  At the same time, the within-region networks
allow nonlinear effects and interactions without prescribing their form.
Estimating the assignment and regression functions jointly lets changes in the
response--covariate relationship inform the boundary.  This distinguishes the
method from a two-stage procedure that first clusters locations according to
distance or sampling density and only then fits separate regressions.

The discrete assignment creates a computational obstacle.  Direct
optimization over all partitions is infeasible even for moderate sample sizes,
and a hard argmax operation does not admit ordinary gradient updates.  During
training we use the continuous categorical relaxations of
\citet{jang2017gumbel} and \citet{maddison2017concrete}, with a temperature that
decreases over epochs.  A graph-Laplacian penalty encourages neighboring
locations to have similar assignment probabilities, following the general
principle of graph-based regularization \citep{belkin2006manifold}.  A second,
weak occupancy penalty guards against fits in which nearly every observation
is assigned to the same cluster.  The final reported partition and predictions
use hard assignments; the relaxation is a device for estimation rather than a
change in the inferential target.

The theoretical analysis separates error from estimating the regression
functions from error caused by an estimated partition.  Existing approximation
and risk results show that ReLU networks can attain favorable rates over broad
nonparametric function classes \citep{yarotsky2017error,
schmidthieber2020nonparametric}, but they do not account for an unknown spatial
assignment.  We first establish identifiability, up to label permutation, from
overlap of the regional covariate distributions and separation of the
cluster-specific regression functions.  We then use a margin condition of the
type employed in plug-in classification theory
\citep{mammen1999smooth,audibert2007fast} to translate assignment-score error
into partition error.  A risk decomposition finally identifies conditions
under which estimating the partition does not alter the leading rate relative
to an oracle that knows the regions.

The numerical studies are organized around the distinction between smooth and
piecewise spatial heterogeneity.  The first experiment considers two-region
partitions with strong or weak separation and balanced or unequal region
sizes.  The second adds three curved regions, spatially correlated covariates,
preferential sampling, and spatially correlated errors; the latter setting is
outside the independent-sampling theory and is included as a robustness
assessment.  Across the nonlinear piecewise settings, the joint estimator is
substantially more accurate than a global neural network, geographically
weighted regression, and a two-stage spatial clustering procedure.
The King County housing analysis gives a complementary result: the estimated
three-region partition is stable and substantively interpretable, although a
global spatial neural network predicts better.  Reporting this contrast helps
clarify when an explicit partition is useful and when a smoothly varying
spatial model is preferable.

The remainder of the paper is arranged as follows.  Section~\ref{sec:model}
defines the model, and Section~\ref{sec:estimation} describes estimation and
selection of the number of regions.  Section~\ref{sec:theory} gives the main
theoretical results, with proofs in Appendix~\ref{app:proofs}.
Section~\ref{sec:simulation} reports the simulation studies, and
Section~\ref{sec:king-county} presents the housing application.  We conclude
in Section~\ref{sec:discussion}.
\section{Model}\label{sec:model}

Let \(S\subset\R^d\) denote the spatial domain. We observe independent
data
\[
    \{(s_i,x_i,y_i)\}_{i=1}^n,
\]
where \(s_i\in S\) is a spatial location, \(x_i\in\R^p\) is a vector of
covariates, and \(y_i\in\R\) is the response. The true number of spatial
clusters is denoted by \(K_0\). Let
\[
    C_0:S\to\{1,\ldots,K_0\}
\]
be the true cluster assignment function, and define the $k$-th true region as 
\[
    C_k=\{s\in S:C_0(s)=k\},
    \qquad k=1,\ldots,K_0 .
\]
The data-generating model is
\begin{equation}
    Y
    =
    f_{0,C_0(S)}(X)+\epsilon,
    \qquad
    \E(\epsilon\mid S,X)=0.
    \label{eq:model}
\end{equation}
Equivalently, if \(s_i\in C_k\), then
\[
    y_i=f_{0,k}(x_i)+\epsilon_i,
\]
where \(f_{0,k}:\R^p\to\R\) is the regression function for cluster
\(k\).

In this paper, we consider estimating the unknown function $f_{0,k}$ via a deep neural network owing to
the complexity of $f_{0,k}$ and the flexibility of neural networks. In addition, we consider to use another assignment network to estimate the probability of each loaction being assigned to region $C_k$. Specifically, the assignment network is
\[
    \Psi:S\to\Delta^{K-1},
    \qquad
    \Psi(s)=\{\Psi_1(s),\ldots,\Psi_K(s)\}^\top,
\]
where \(\Delta^{K-1}\) is the probability simplex. The assignment
network depends on spatial location only. The within-cluster regression
functions are represented by separate neural networks
\[
    \Phi_k:\R^p\to\R,
    \qquad k=1,\ldots,K.
\]
At inference, the fitted hard assignment is
\begin{equation}
    \widehat C(s)
    =
    \argmax_{1\le k\le K}\widehat\Psi_k(s),
    \label{eq:hard_assign}
\end{equation}
and the fitted response is
\begin{equation}
    \widehat f(s,x)
    =
    \widehat\Phi_{\widehat C(s)}(x).
    \label{eq:hard_pred}
\end{equation}

\section{Estimation and Optimization}\label{sec:estimation}

The hard assignment rule in \eqref{eq:hard_assign} is not differentiable
with respect to the parameters of \(\Psi\). During training, we therefore
replace the hard assignment by a continuous relaxation.  For temperature
\(\tau>0\), define
\begin{equation}
    \widetilde\pi_k(s)
    =
    \frac{
        \exp\{(\log \Psi_k(s)+g_k)/\tau\}
    }{
        \sum_{\ell=1}^K
        \exp\{(\log \Psi_\ell(s)+g_\ell)/\tau\}
    },
    \qquad
    g_k\stackrel{\mathrm{iid}}{\sim}\mathrm{Gumbel}(0,1).
    \label{eq:gumbel}
\end{equation}
The differentiable prediction used during training is
\begin{equation}
    \widetilde f(s_i,x_i)
    =
    \sum_{k=1}^K
    \widetilde\pi_k(s_i)\Phi_k(x_i).
    \label{eq:soft_pred}
\end{equation}
Equation~\eqref{eq:gumbel} is the Gumbel--softmax relaxation of
\citet{jang2017gumbel}, equivalently the Concrete relaxation of
\citet{maddison2017concrete}.  Setting \(g_k=0\) gives its deterministic
annealed-softmax version, which we use in the numerical experiments to avoid
adding Monte Carlo variation to the optimization.  In either case, as
\(\tau\to0\), the relaxed assignment concentrates on a vertex of the simplex;
with \(g_k=0\), that vertex corresponds to \(\argmax_k\Psi_k(s)\).

In practice, the temperature is annealed over training epochs \(t\):
\[
    \tau_t
    =
    \max\{\tau_{\min},\tau_0\rho^t\},
\]
where \(\tau_0>\tau_{\min}>0\) and \(0<\rho<1\). The hard assignment
rule \eqref{eq:hard_assign} is used after training.

Minimizing squared error alone may nevertheless lead to unstable or degenerate
solutions. For example, with a large number of clusters, the model may
create fragmented spatial regions or assign most observations to only
one cluster. We therefore minimize the penalized objective
\begin{equation}
\begin{aligned}
    L_n(\Psi,\Phi)
    &=
    \frac{1}{n}
    \sum_{i=1}^n
    \{y_i-\widetilde f(s_i,x_i)\}^2
    +
    \lambda_1 P_{\mathrm{sp}}(\Psi)
    +
    \lambda_2 P_{\mathrm{bal}}(\Psi).
\end{aligned}
\label{eq:loss}
\end{equation}
The spatial smoothness penalty is defined over a neighborhood graph on
the observed locations:
\begin{equation}
    P_{\mathrm{sp}}(\Psi)
    =
    \sum_{i\sim j}
    w_{ij}
    \norm{\Psi(s_i)-\Psi(s_j)}_2^2.
    \label{eq:spatial_penalty}
\end{equation}
Here \(i\sim j\) denotes spatial adjacency, for example \(k\)-nearest
neighbors or a distance-threshold graph, and \(w_{ij}\) is a nonnegative
weight that decreases with the distance between \(s_i\) and \(s_j\). The
penalty in \eqref{eq:spatial_penalty} is a graph Laplacian smoothness
penalty and discourages fragmented cluster boundaries. The balance penalty is based on the average fitted cluster occupancy
\[
    \bar\pi_k
    =
    \frac{1}{n}\sum_{i=1}^n \Psi_k(s_i),
    \qquad k=1,\ldots,K.
\]
We use
\begin{equation}
    P_{\mathrm{bal}}(\Psi)
    =
    \sum_{k=1}^K
    \bar\pi_k\log(K\bar\pi_k),
    \label{eq:balance_penalty}
\end{equation}
with the convention \(0\log0=0\). This is the Kullback--Leibler
divergence between the empirical cluster occupancy \(\bar\pi\) and the
uniform distribution on \(\{1,\ldots,K\}\). It is minimized when
\(\bar\pi_k=1/K\) for all \(k\), and therefore discourages cluster
collapse.

The assignment and regression networks are initialized jointly and trained by
mini-batch gradient descent.  At epoch \(t\), we set
\(\tau_t=\max\{\tau_{\min},\tau_0\rho^t\}\), evaluate the relaxed prediction in
\eqref{eq:soft_pred}, and update all network parameters using the objective in
\eqref{eq:loss}.  The deterministic relaxation sets \(g_{ik}=0\).  After the
last epoch, the relaxed assignments are discarded and the hard rules in
\eqref{eq:hard_assign} and \eqref{eq:hard_pred} are used.  Because the objective
is nonconvex, the numerical studies use more than one initialization and retain
the fit with the smallest prespecified selection loss. The number of fitted clusters \(K\) is treated as a tuning parameter.
In practice, \(K\) can be selected over a candidate set
\(\{2,\ldots,K_{\max}\}\) by cross-validated prediction error. As a
heuristic alternative, one may use a BIC-type criterion
\begin{equation}
    \mathrm{BIC}(K)
    =
    n\log\widehat\sigma_K^2
    +
    \mathrm{df}(K)\log n,
    \label{eq:bic}
\end{equation}
where \(\widehat\sigma_K^2\) is the residual variance under the fitted
model with \(K\) clusters and \(\mathrm{df}(K)\) is an effective
degrees-of-freedom approximation for the fitted networks. Since degrees
of freedom are not straightforward for neural networks, cross-validation
is the default recommendation.

\section{Theoretical properties}\label{sec:theory}

Throughout this section, we take \(K=K_0\).  Since cluster labels are
arbitrary, all results are stated up to a permutation of the labels.  Let
\[
  C_0:\mathcal S\to\{1,\ldots,K_0\}
\]
denote the true spatial assignment, and define
\[
  \mathcal C_k=\{s\in\mathcal S:C_0(s)=k\},
  \qquad k=1,\ldots,K_0.
\]
The data-generating model is
\begin{equation}\label{eq:true_model_theory}
  Y=f_{0,C_0(S)}(X)+\epsilon,
  \qquad
  E(\epsilon\mid S,X)=0.
\end{equation}

For any measurable assignment rule
\(q:\mathcal S\to\{1,\ldots,K_0\}\) and any collection of functions
\(\phi=(\phi_1,\ldots,\phi_{K_0})\), define the population risk
\begin{equation}\label{eq:population_risk}
  R(q,\phi)
  =
  E\left\{Y-\phi_{q(S)}(X)\right\}^2 .
\end{equation}
The oracle pair is \((C_0,f_0)\), where
\(f_0=(f_{0,1},\ldots,f_{0,K_0})\), and its risk is
\[
  R_0=R(C_0,f_0)=E(\epsilon^2).
\]
For each true cluster, define
\[
  \mu_k=P(S\in\mathcal C_k),
  \qquad
  \|g\|_{k,2}^2
  =
  E\{g(X)^2\mid S\in\mathcal C_k\}.
\]

\begin{assumption}[Nonvanishing regions]\label{ass:size}
There exists a constant \(\mu_0>0\) such that
\[
  \mu_k\geq \mu_0,
  \qquad k=1,\ldots,K_0.
\]
\end{assumption}

\begin{assumption}[Common covariate support]\label{ass:support}
There exists a probability measure \(\nu\) on \(\mathcal X\) such that, for
almost every \(s\), the conditional distribution of \(X\) given \(S=s\) has
density \(p_s\) with respect to \(\nu\).  There are constants
\(0<m_x<M_x<\infty\) such that
\[
  m_x\leq p_s(x)\leq M_x
\]
for \(\nu\)-almost every \(x\) and almost every \(s\).
\end{assumption}

\begin{assumption}[Separation]\label{ass:separation}
There exists a constant \(\kappa>0\) such that, for all \(k\neq l\),
\[
  \|f_{0,k}-f_{0,l}\|_{L^2(\nu)}\geq \kappa .
\]
\end{assumption}

\begin{assumption}[Regression regularity]\label{ass:regression}
For each \(k\),
\[
  f_{0,k}\in \mathcal H^\beta(\mathcal X),
\]
where \(\mathcal H^\beta(\mathcal X)\) is a H\"older ball on a compact set
\(\mathcal X\subset\mathbb R^p\).  In addition,
\[
  \sup_{1\leq k\leq K_0}\|f_{0,k}\|_\infty\leq M.
\]
The fitted regression functions are restricted to a uniformly bounded class
satisfying \(\|\phi_k\|_\infty\leq M\).  The error \(\epsilon\) is
sub-Gaussian.
\end{assumption}

We state the remaining conditions directly on the spatial boundary.  This is
preferable to treating a particular softmax score \(\Psi^*\) as a population
parameter: model~\eqref{eq:model} identifies its argmax partition, but many
different score scales and transformations produce the same partition.  Let
\[
  \mathcal B_0=\bigcup_{k=1}^{K_0}\partial\mathcal C_k,
  \qquad
  \mathcal B_0^t=\{s\in\mathcal S:d(s,\mathcal B_0)\leq t\}.
\]

\begin{assumption}[Boundary regularity]\label{ass:assignment_smooth}
Apart from finitely many junction points, \(\mathcal B_0\) can be covered by
finitely many coordinate neighborhoods in which it is the graph of a
\(\beta'\)-H\"older function.
\end{assumption}

\begin{assumption}[Boundary margin]\label{ass:margin}
There are constants \(c_0>0\), \(\gamma>0\), and \(t_0>0\) such that
\[
  P(S\in\mathcal B_0^t)\leq c_0t^\gamma,
  \qquad 0<t<t_0.
\]
\end{assumption}
For an estimated assignment \(q\), define the permutation-invariant
localization error
\begin{equation}\label{eq:boundary_localization}
  \delta_\partial(q,C_0)
  =
  \inf_{\sigma\in\mathfrak S_{K_0}}
  \inf\left\{t>0:
    \{s:\sigma(q(s))\neq C_0(s)\}\subseteq\mathcal B_0^t
  \right\}.
\end{equation}
Thus \(\delta_\partial\) measures how far an incorrectly assigned location
can lie from the true boundary; it does not compare nonidentified softmax
probabilities.

\begin{assumption}[Boundary localization rate]\label{ass:assignment_rate}
For a deterministic sequence \(a_n\to0\) and a constant \(C>0\),
\[
  E\{\delta_\partial(\widehat C,C_0)^\gamma\}
  \leq C a_n^\gamma .
\]
\end{assumption}

 Based on the above assumptions, the following theorem is claimed. 
\begin{theorem}[Identifiability]\label{thm:identifiability}
Suppose Assumptions~\ref{ass:size}--\ref{ass:separation} hold.  Among
measurable assignment rules
\(q:\mathcal S\to\{1,\ldots,K_0\}\) satisfying
\[
  P\{q(S)=a\}>0,
  \qquad a=1,\ldots,K_0,
\]
and square-integrable regression functions, the minimizers of \(R(q,\phi)\)
are exactly
\[
  q(s)=\sigma\{C_0(s)\},
  \qquad
  \phi_{\sigma(k)}=f_{0,k},
  \qquad k=1,\ldots,K_0,
\]
for some permutation \(\sigma\in\mathfrak S_{K_0}\), up to null sets.
\end{theorem}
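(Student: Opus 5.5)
Write $m(s,x)=f_{0,C_0(s)}(x)$ for the regression function. The plan is to decompose the risk around it. By \eqref{eq:true_model_theory} and $E(\epsilon\mid S,X)=0$, for any $(q,\phi)$ with square-integrable $\phi$,
\[
  R(q,\phi)=R_0+E\bigl\{\phi_{q(S)}(X)-f_{0,C_0(S)}(X)\bigr\}^2 ,
\]
because the cross term has zero conditional mean given $(S,X)$. The true pair $(C_0,f_0)$ attains $R_0$, and Assumption~\ref{ass:size} ensures it meets the occupancy constraint. Hence the minimizers are exactly the pairs for which the excess term vanishes, that is,
\[
  \phi_{q(s)}(x)=f_{0,C_0(s)}(x)
  \qquad\text{for }P_{S,X}\text{-a.e. }(s,x).
\]
Using Assumption~\ref{ass:support}, the conditional law of $X$ given $S=s$ has a density bounded below by $m_x>0$ with respect to $\nu$. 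Disintegration then upgrades the almost-everywhere equality to the statement that, for almost every $s$, $\phi_{q(s)}=f_{0,C_0(s)}$ holds $\nu$-a.e. It is worth noting that $m_x>0$ also forces $\nu$ to be finite, since $\int p_s\,d\nu=1$, so the $L^2(\nu)$ norms in Assumption~\ref{ass:separation} make sense.

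The core step is combinatorial. For $a\in\{1,\ldots,K_0\}$ and $k\in\{1,\ldots,K_0\}$, let $A_{ak}=\{s:q(s)=a,\ C_0(s)=k\}$, discarding the null set where the $\nu$-a.e. identity fails.

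1. \emph{Each label of $q$ meets at most one true region.} Suppose $P(S\in A_{ak})>0$ and $P(S\in A_{al})>0$ for some $k\neq l$. Then $f_{0,k}=\phi_a=f_{0,l}$ holds $\nu$-a.e., so $\|f_{0,k}-f_{0,l}\|_{L^2(\nu)}=0<\kappa$, contradicting Assumption~\ref{ass:separation}.

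2. \emph{Every label of $q$ meets some true region.} This follows from $P\{q(S)=a\}>0$. Together with step 1, we obtain a map $\tau:\{1,\ldots,K_0\}\to\{1,\ldots,K_0\}$ such that $q(S)=a$ implies $C_0(S)=\tau(a)$ almost surely.

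3. \emph{$\tau$ is a bijection.} Each true region has positive mass by Assumption~\ref{ass:size}, and its points are assigned some label $a$ by $q$, which forces $\tau(a)=k$. So $\tau$ is surjective, and since it maps a finite set to itself, it is bijective.

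Setting $\sigma=\tau^{-1}$ gives $q(s)=\sigma\{C_0(s)\}$ almost surely. Moreover $\phi_{\sigma(k)}=f_{0,k}$ holds $\nu$-a.e., because $A_{\sigma(k)k}$ has positive mass.

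For the converse, any pair of the stated form satisfies the almost-everywhere identity, so its excess risk is zero and it is a minimizer. I expect the main obstacle to be measure-theoretic rather than conceptual. The $P_{S,X}$-a.e. equality has to be converted into $\nu$-a.e. equality of whole functions on sets of positive $S$-mass. That conversion needs the lower density bound, plus a Fubini argument giving, on the positive-mass set $A_{ak}$, at least one (indeed almost every) $s$ where the identity holds $\nu$-a.e. Finally, "up to null sets" should be read as uniqueness of $\phi_a$ only in $L^2(\nu)$.
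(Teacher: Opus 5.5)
Your proposal is correct and follows essentially the same route as the paper's proof. Both use the decomposition $R(q,\phi)=R_0+E\{\phi_{q(S)}(X)-f_{0,C_0(S)}(X)\}^2$, the lower density bound $m_x$ to obtain $\phi_a=f_{0,k}$ $\nu$-a.e.\ on each positive-mass cell $\{q=a\}\cap\mathcal C_k$, separation to stop a label from meeting two regions, and Assumption~\ref{ass:size} to get a bijection. The differences are minor: you use the occupancy constraint $P\{q(S)=a\}>0$ to define $\tau$, whereas the paper's pigeonhole count (disjoint nonempty sets of labels for each region) does not need it, and you state the converse and the disintegration step that the paper leaves implicit.
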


Theorem~\ref{thm:identifiability} concerns the statistical target rather than
the training algorithm.  It shows that response-informed clustering has a
unique population interpretation after labels are ignored.  Both overlap and
separation are substantive: if either is removed, one fitted regional function may
represent more than one true region without increasing the risk.  The proof
is given in Appendix~\ref{app:proofs}.

\begin{theorem}[Partition error]\label{thm:partition}
Under Assumptions~\ref{ass:margin} and \ref{ass:assignment_rate},
\[
  E\left[
    \inf_{\sigma\in\mathfrak S_{K_0}}
    P_S\left\{
      \sigma\bigl(\widehat C(S)\bigr)\neq C_0(S)
      \mid \mathcal D_n
    \right\}
  \right]
  \lesssim a_n^{\gamma},
\]
where \(P_S(\cdot\mid\mathcal D_n)\) denotes probability over an independent
test location \(S\), conditional on the training sample \(\mathcal D_n\).
\end{theorem}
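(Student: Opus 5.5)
The argument is a direct pointwise inclusion followed by the margin condition. Fix the training sample \(\mathcal D_n\), and write \(\widehat\delta=\delta_\partial(\widehat C,C_0)\), which is a function of \(\mathcal D_n\) only.

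\emph{Step 1: pass from the localization error to an inclusion.} By the definition~\eqref{eq:boundary_localization}, for every \(\eta>0\) there exist a permutation \(\sigma_\eta\in\mathfrak S_{K_0}\) and some \(t\le\widehat\delta+\eta\) with
\[
  \{s:\sigma_\eta(\widehat C(s))\neq C_0(s)\}\subseteq\mathcal B_0^{t}\subseteq\mathcal B_0^{\widehat\delta+\eta}.
\]
The second inclusion holds because the sets \(\mathcal B_0^t\) are monotone in \(t\).

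\emph{Step 2: bound the conditional misclassification probability.} The test location \(S\) is independent of \(\mathcal D_n\). Taking \(P_S\)-probability of the inclusion in Step 1 gives
\[
  \inf_{\sigma}P_S\{\sigma(\widehat C(S))\neq C_0(S)\mid\mathcal D_n\}
  \le P(S\in\mathcal B_0^{\widehat\delta+\eta}).
\]
If \(\widehat\delta+\eta<t_0\), Assumption~\ref{ass:margin} bounds the right-hand side by \(c_0(\widehat\delta+\eta)^\gamma\). Letting \(\eta\downarrow0\), we obtain the bound \(c_0\widehat\delta^\gamma\) whenever \(\widehat\delta<t_0\).

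\emph{Step 3: handle large localization error.} When \(\widehat\delta\ge t_0\), the margin condition does not apply. We use instead the trivial bound \(1\le t_0^{-\gamma}\widehat\delta^{\gamma}\). Combining the two cases,
\[
  \inf_{\sigma}P_S\{\sigma(\widehat C(S))\neq C_0(S)\mid\mathcal D_n\}
  \le \max(c_0,t_0^{-\gamma})\,\widehat\delta^{\gamma}
\]
holds almost surely.

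\emph{Step 4: take expectations.} Taking expectation over \(\mathcal D_n\) and applying Assumption~\ref{ass:assignment_rate} gives the bound \(\max(c_0,t_0^{-\gamma})\,C\,a_n^\gamma\), which is the claim.

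\emph{Main technical point: measurability.} The random variables \(\widehat\delta\) and \(\inf_\sigma P_S(\cdot\mid\mathcal D_n)\) must be measurable for the expectations to make sense. The infimum over \(\sigma\) is a minimum over the finite set \(\mathfrak S_{K_0}\), so it is a minimum of finitely many measurable functions. For \(\widehat\delta\), Assumption~\ref{ass:assignment_rate} already presupposes that its expectation is defined. If needed, one can instead work with an outer expectation, which leaves the inequality chain unchanged.

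A minor side issue is that the infimum over \(t\) in \eqref{eq:boundary_localization} may not be attained. The \(\eta\)-argument in Steps 1–2 avoids this. Alternatively, closedness of \(\mathcal B_0^t=\{d(s,\mathcal B_0)\le t\}\) gives \(\bigcap_{t>\widehat\delta}\mathcal B_0^t=\mathcal B_0^{\widehat\delta}\), so the inclusion holds at \(\widehat\delta\) itself.
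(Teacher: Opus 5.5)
Your proof is correct and follows the paper's argument: the \(\eta\)-approximate inclusion of the misclassified set into \(\mathcal B_0^{\widehat\delta+\eta}\), then the margin condition below \(t_0\), then the trivial bound by one above \(t_0\), and finally Assumption~\ref{ass:assignment_rate}. Splitting cases on \(\widehat\delta\) rather than on \(\widehat\delta+\eta\), and writing out the constant \(\max(c_0,t_0^{-\gamma})\), make precise the step the paper states as ``the same conclusion up to a fixed constant''; the measurability remark is an extra the paper omits.
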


The conclusion has a direct geometric interpretation.  A boundary
displacement of order \(a_n\) can only affect locations in an
\(a_n\)-wide tube, to which the margin condition assigns probability of
order \(a_n^\gamma\). The proof is given in Appendix~\ref{app:proofs}.

\begin{remark}[Penalties and the population target]\label{rem:penalty}
The spatial smoothness and balance penalties are useful regularizers in
finite samples.  They are not part of the population risk
\eqref{eq:population_risk}.  This distinction matters for the balance
penalty, because it encourages nearly equal cluster sizes, whereas
Assumption~\ref{ass:size} only requires each true cluster to have positive
probability.  For the theoretical target to remain the unpenalized oracle
partition, the penalty weights should be viewed as tuning parameters that
vanish asymptotically, or one should verify that the penalties do not change
the population minimizer in the setting under study.
\end{remark}

\begin{proposition}[Oracle regression]\label{prop:oracle}
Let \(\widehat\Phi_k^{\rm or}\) be the least-squares neural-network
estimator of \(f_{0,k}\) trained using only observations with
\(S_i\in\mathcal C_k\).  Under Assumptions~\ref{ass:size},
\ref{ass:support}, and \ref{ass:regression},
\[
  E\|\widehat\Phi_k^{\rm or}-f_{0,k}\|_{k,2}^2
  \lesssim
  n^{-2\beta/(2\beta+p)}
\]
up to logarithmic factors.
\end{proposition}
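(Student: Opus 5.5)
The plan is to reduce Proposition~\ref{prop:oracle} to the standard nonparametric least-squares rate for ReLU networks of \citet{schmidthieber2020nonparametric}, applied within the $k$-th region after conditioning on which observations fall in it. The fixed-design conditioning and the change of measure below are the only parts specific to this paper; the rest is a citation.

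\textbf{Conditioning on the region.} Let $N_k=\#\{i:S_i\in\mathcal C_k\}$. Conditionally on the indicators $\{1(S_i\in\mathcal C_k)\}_{i=1}^n$, the observations $(X_i,Y_i)$ with $S_i\in\mathcal C_k$ are i.i.d.\ draws from the law of $(X,Y)$ given $S\in\mathcal C_k$. On this subsample, model~\eqref{eq:true_model_theory} reduces to $Y=f_{0,k}(X)+\epsilon$.

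To make this a standard regression problem I need two facts about the error and one about the design.
\begin{itemize}
\item $E(\epsilon\mid X,S\in\mathcal C_k)=0$, which follows from $E(\epsilon\mid S,X)=0$ by the tower property.
\item $\epsilon$ is sub-Gaussian by Assumption~\ref{ass:regression}, and it remains sub-Gaussian conditionally after the constant is inflated by at most a factor depending on $\mu_k^{-1}\le\mu_0^{-1}$.
\item Given $S\in\mathcal C_k$, the law of $X$ has $\nu$-density $\bar p_k(x)=E\{p_S(x)\mid S\in\mathcal C_k\}\in[m_x,M_x]$, by Assumption~\ref{ass:support}.
\end{itemize}
The last fact matters because the target norm is $\|\cdot\|_{k,2}$, i.e.\ $L^2$ of exactly this design law.

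\textbf{Applying the network rate.} I would invoke the oracle inequality of \citet{schmidthieber2020nonparametric} (their Theorem~1 combined with the approximation bound of their Theorem~5, or \citealp{yarotsky2017error}). For a sparse ReLU class with depth $\asymp\log N_k$, width and sparsity $\asymp N_k^{p/(2\beta+p)}\log N_k$, and outputs clipped at $M$, it gives, conditionally on $N_k=m$,
\[
E\bigl\{\|\widehat\Phi_k^{\rm or}-f_{0,k}\|_{k,2}^2\mid N_k=m\bigr\}\lesssim m^{-2\beta/(2\beta+p)}\log^{c}m .
\]
This holds uniformly over $f_{0,k}\in\mathcal H^\beta$ with $\|f_{0,k}\|_\infty\le M$ (Assumption~\ref{ass:regression}). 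Their result allows an arbitrary design distribution on a compact set, so no density bound is strictly needed here. The bound $\bar p_k\le M_x$ only serves to transfer approximation error from $L^\infty$ to $L^2$, which is automatic anyway since the bound is in sup norm. Their Gaussian-noise assumption can be replaced by sub-Gaussian noise by the usual modification of the entropy/chaining step. I would remark on this rather than redo it.

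\textbf{Removing the conditioning on $N_k$.} Here $N_k\sim\mathrm{Bin}(n,\mu_k)$ with $\mu_k\ge\mu_0$ (Assumption~\ref{ass:size}). Split on the event $A=\{N_k\ge n\mu_0/2\}$.
\begin{itemize}
\item On $A$, the conditional bound gives $\lesssim(n\mu_0/2)^{-2\beta/(2\beta+p)}$ up to logs, since the rate is monotone in $m$.
\item On $A^c$, the loss is at most $4M^2$ because both functions are bounded by $M$. Chernoff gives $P(A^c)\le\exp(-n\mu_0/8)$, which is negligible.
\end{itemize}
Combining the two cases yields the claimed $n^{-2\beta/(2\beta+p)}$ rate up to logarithmic factors, with constants depending on $\mu_0,M,\beta,p$ and the noise constant.

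\textbf{Main obstacle.} The delicate point is the legitimacy of treating the subsample as an i.i.d.\ regression sample with the right noise properties, in particular verifying conditional sub-Gaussianity. If sub-Gaussianity is only assumed marginally, I would first try to read Assumption~\ref{ass:regression} as conditional on $(S,X)$, which is the natural interpretation. Failing that, I would use the bound $E[e^{\lambda\epsilon}\mid S\in\mathcal C_k]\le\mu_0^{-1}E e^{\lambda\epsilon}$. This controls the moment generating function only up to a multiplicative constant, which still suffices for the tail bounds used in the chaining step. The remaining steps are direct citations.
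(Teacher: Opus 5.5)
Your proposal is correct and follows essentially the same route as the paper's proof. Both restrict to the $n_k$ observations in $\mathcal C_k$, control $P(n_k<n\mu_0/2)$ by a Chernoff bound, combine the ReLU sup-norm approximation bound with a least-squares oracle inequality at a sample size of order $n$, and absorb the complementary event using the uniform bound $M$. Your extra checks are refinements the paper glosses over, not a different argument: the conditional mean-zero and sub-Gaussian noise on the subsample, and the observation that the density bounds in Assumption~\ref{ass:support} are not needed because the oracle inequality is already in $\|\cdot\|_{k,2}$ and the approximation is in sup norm.
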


This proposition is an oracle benchmark.  The cluster-size assumption makes
the within-region sample size proportional to \(n\), and common support makes
the relevant norms equivalent.  It does not involve boundary estimation;
the gap between this rate and the rate of the joint estimator therefore
isolates the cost of learning the spatial partition.  The proof is given in
Appendix~\ref{app:proofs}.

\begin{theorem}[Risk decomposition]\label{thm:risk}
Let
\[
  f_0(s,x)=f_{0,C_0(s)}(x),
  \qquad
  \widehat f(s,x)=\widehat\Phi_{\widehat C(s)}(x).
\]
Let \(\sigma^*\in\mathfrak S_{K_0}\) be a relabeling of the fitted clusters,
and define
\[
  \widetilde C(s)=\sigma^*\{\widehat C(s)\},
  \qquad
  \widetilde\Phi_k=\widehat\Phi_{\sigma^{*-1}(k)}.
\]
Then the fitted prediction is unchanged by relabeling:
\[
  \widehat f(s,x)
  =
  \widetilde\Phi_{\widetilde C(s)}(x).
\]
Under Assumption~\ref{ass:regression},
\[
  E\{\widehat f(S,X)-f_0(S,X)\}^2
  \leq
  \sum_{k=1}^{K_0}
  \mu_k E\|\widetilde\Phi_k-f_{0,k}\|_{k,2}^2
  +
  4M^2P\{\widetilde C(S)\neq C_0(S)\},
\]
where \((S,X)\) is an independent test point.
\end{theorem}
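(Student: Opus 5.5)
The plan is to verify the relabeling identity first, then split the squared error on the event where the relabeled assignment is correct and on its complement.

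\textbf{Relabeling identity.} By definition, $\widetilde\Phi_{\widetilde C(s)}=\widehat\Phi_{\sigma^{*-1}(\sigma^*(\widehat C(s)))}=\widehat\Phi_{\widehat C(s)}$. This gives $\widehat f(s,x)=\widetilde\Phi_{\widetilde C(s)}(x)$ pointwise, and no probabilistic argument is involved.

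\textbf{Splitting on the assignment event.} Let $A=\{\widetilde C(S)=C_0(S)\}$. Write
\[
  \{\widehat f(S,X)-f_0(S,X)\}^2
  =
  \{\widetilde\Phi_{C_0(S)}(X)-f_{0,C_0(S)}(X)\}^2\mathbf 1_A
  +
  \{\widetilde\Phi_{\widetilde C(S)}(X)-f_{0,C_0(S)}(X)\}^2\mathbf 1_{A^c}.
\]

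On $A$, I bound $\mathbf 1_A\le 1$ and decompose over the true regions. Since $(S,X)$ is independent of the training sample $\mathcal D_n$, I condition on $\mathcal D_n$; then $\widetilde\Phi_k$ and $\widetilde C$ are fixed functions. This gives
\[
  E\bigl[\{\widetilde\Phi_{C_0(S)}(X)-f_{0,C_0(S)}(X)\}^2\mid\mathcal D_n\bigr]
  =
  \sum_k \mu_k\,E\bigl[\{\widetilde\Phi_k(X)-f_{0,k}(X)\}^2\mid S\in\mathcal C_k,\mathcal D_n\bigr]
  =
  \sum_k\mu_k\|\widetilde\Phi_k-f_{0,k}\|_{k,2}^2 .
\]
Here the last norm is evaluated at the fixed function $\widetilde\Phi_k$. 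Taking expectation over $\mathcal D_n$ yields the first term of the bound.

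On $A^c$, Assumption~\ref{ass:regression} gives $\|\widetilde\Phi_j\|_\infty\le M$ and $\|f_{0,k}\|_\infty\le M$. Hence the integrand is at most $(2M)^2=4M^2$, and this term contributes at most $4M^2P(A^c)=4M^2P\{\widetilde C(S)\ne C_0(S)\}$. The probability here is joint over the training data and the test point.

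\textbf{Main point to check.} The only delicate step is the conditioning argument. The relabeling $\sigma^*$ may depend on $\mathcal D_n$, but it is still a fixed permutation given $\mathcal D_n$. Therefore $\widetilde\Phi_k$ is a measurable function of $\mathcal D_n$, and the identity $E\|\cdot\|_{k,2}^2$ is understood as the outer expectation of the conditional norm. The decomposition uses $\mathbf 1_A\le1$ rather than an equality, so dropping the overlap between the two events only loosens the bound and does not break it. Adding the two pieces gives the stated inequality.
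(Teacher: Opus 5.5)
Your proposal is correct and follows the paper's proof essentially step for step. Like the paper, you split on the event $\{\widetilde C(S)=C_0(S)\}$. On that event you drop the indicator and decompose over the true regions by conditioning on the independent training sample, and on the complement you use the uniform bound $2M$. Your explicit check of the relabeling identity and your remark that a data-dependent $\sigma^*$ is fixed given $\mathcal D_n$ are small clarifications that the paper leaves implicit.
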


The first term is the error that remains when the estimated label is correct;
the second is the cost of assigning a location to the wrong regional function.  The
decomposition is modular and can accommodate regression estimators other than
neural networks.  Boundedness produces the constant \(4M^2\); alternative
moment conditions could replace it with a less compact tail bound.  The proof
is given in Appendix~\ref{app:proofs}.

We now combine the previous results.  Let \(\sigma^*\) be the permutation
used in Theorem~\ref{thm:partition}, and define
\[
  \widetilde C=\sigma^*\circ\widehat C,
  \qquad
  \widetilde\Phi_k=\widehat\Phi_{\sigma^{*-1}(k)}.
\]
Suppose the fitted regression networks satisfy the high-level bound
\begin{equation}\label{eq:regression_rate_high_level}
  \sum_{k=1}^{K_0}
  \mu_k E\|\widetilde\Phi_k-f_{0,k}\|_{k,2}^2
  \lesssim
  n^{-2\beta/(2\beta+p)}
\end{equation}
up to logarithmic factors.  This is the same order as the oracle benchmark
in Proposition~\ref{prop:oracle}.  Then Theorems~\ref{thm:partition} and
\ref{thm:risk} imply
\begin{equation}\label{eq:main_rate}
  E\{\widehat f(S,X)-f_0(S,X)\}^2
  \lesssim
  n^{-2\beta/(2\beta+p)}
  +
  a_n^{\gamma}.
\end{equation}
The assignment term is no larger than the oracle regression term when
\begin{equation}\label{eq:oracle_condition}
  a_n^\gamma
  =
  O\{n^{-2\beta/(2\beta+p)}\}.
\end{equation}
Under \eqref{eq:oracle_condition},
\[
  E\{\widehat f(S,X)-f_0(S,X)\}^2
  \lesssim
  n^{-2\beta/(2\beta+p)}
\]
up to logarithmic factors.  Thus consistent partition estimation alone is
not enough for the oracle conclusion: the boundary must be learned quickly
enough that assignment error does not dominate estimation of the
\(p\)-dimensional regional regressions.  Conversely, when
\eqref{eq:oracle_condition} fails, improving the regional regressions cannot remove the
leading error caused by an inaccurately located boundary.

\begin{remark}[Continuous relaxation]\label{rem:gumbel}
The results concern the final hard assignment.  The annealed softmax is an
optimization device, not a second population target.  A full analysis would
need a uniform comparison between relaxed and hard risks, with an additional
term tending to zero with the final temperature.  Such an optimization result
is beyond the scope of this paper and is not implied by
Assumption~\ref{ass:assignment_rate}.
\end{remark}

\section{Simulation studies}\label{sec:simulation}

We conducted Monte Carlo experiments to examine prediction accuracy, recovery
of the spatial partition, sensitivity to unequal cluster sizes, and the
contribution of spatial regularization.  Two experiments were used for
distinct purposes.  Experiment~1 isolates separation and region-size effects
in a two-region model with independent sampling.  Experiment~2 considers
three curved regions and adds nonlinear regression functions, spatially structured
covariates, preferential sampling, and correlated errors.  Except for the
final selection analysis, \(K\) was fixed at its true value so that the
results concern estimation rather than selection of the number of regions.

\paragraph{Experiment 1: separation and unequal region sizes.}

For each replication, the locations were generated independently from the
uniform distribution on \([0,1]^2\), and the five covariates were generated
independently from \(\operatorname{Unif}(-1,1)\), independently of location.
Let
\[
  g(x)=0.7(x_2^2-1/3)+0.45x_4,
  \qquad
  h(x)=1.6\sin(\pi x_1)+0.8x_3x_5.
\]
Conditional on the spatial label \(C_0(s)\), the response was generated from
\[
  Y=g(X)+a\{1(C_0(S)=1)-1(C_0(S)=2)\}h(X)+\epsilon,
  \qquad \epsilon\sim N(0,\sigma^2).
\]
We considered three settings.  In the balanced, strong-separation setting,
\(C_0(s)=1\) when
\(s_1\le 0.50+0.14\sin(2\pi s_2)\), and \(C_0(s)=2\) otherwise, with
\((a,\sigma)=(1,0.55)\).  The unbalanced setting used \(C_0(s)=1\) inside
the circular region
\[
 (s_1-0.33)^2+(s_2-0.58)^2\le0.30^2,
 \qquad (a,\sigma)=(1,0.55),
\]
so that the smaller cluster occupied approximately 28\% of the spatial domain.
The weak-separation setting used the balanced sinusoidal partition with
\((a,\sigma)=(0.45,0.80)\).  The sample size was \(n=300,600\), or 1200.

The proposed method used a two-layer assignment network with 12 hidden ReLU
units and two cluster-specific regression networks, each with 20 hidden ReLU
units.  We used a deterministic annealed softmax gate so that its
zero-temperature limit agrees with the hard assignment used for prediction.
The temperature was decreased geometrically from 1.5 to 0.18.  An eight-nearest
neighbor graph was used in the normalized spatial penalty
\[
 |E_n|^{-1}\sum_{(i,j)\in E_n}w_{ij}
 \|\Psi(s_i)-\Psi(s_j)\|_2^2,
\]
where \(w_{ij}=\exp\{-(\|s_i-s_j\|/b)^2\}\) and \(b\) was the median edge
length.  The spatial and balance penalty weights were 0.18 and 0.02,
respectively.  Each joint model was fit from two initializations, and the fit
with the smaller training objective was retained.

The same four implementable methods were used throughout both simulation
experiments and the real-data analysis: the proposed estimator; a global
neural network that directly regressed the response on \((S,X)\) without
imposing a partition; regularized geographically weighted regression (GWR)
with a Gaussian kernel; and a two-stage procedure that applied spatial
\(K\)-means and then fit separate neural networks.  GWR bandwidths were
selected on an independent validation subset from
\(\{0.025,0.04,0.06,0.09,0.14,0.22\}\).  In the simulations, we additionally
reported an oracle estimator that fit separate neural networks using the true
labels.  The cluster-specific network architecture was held fixed across the
clustered methods.  The oracle is not implementable and is included only as a
reference benchmark, not as a competing method.

Performance was evaluated on an independently generated test set of 2500
observations.  The primary measure was integrated squared error (ISE),
\[
  \frac{1}{2500}\sum_{i=1}^{2500}
  \{\widehat f(S_i,X_i)-f_0(S_i,X_i)\}^2.
\]
We also recorded misclassification error after optimal permutation of the
cluster labels and the adjusted Rand index (ARI).  Results are based on 100 replications; numbers
in parentheses in the tables are Monte Carlo standard errors.

Table~\ref{tab:sim-ise} summarizes prediction accuracy, and
Figure~\ref{fig:sim-ise} illustrates the fitted conditional-mean surfaces in
representative data sets.  In the balanced, strong-separation setting, the ISE of the proposed
estimator decreased from 0.886 at \(n=300\) to 0.276 at \(n=1200\), compared
with 1.203 and 0.531, respectively, for the global network.  The corresponding
oracle errors were 0.490 and 0.084.  Thus the gap between the proposed and
oracle estimators remained visible, but both improved substantially with
sample size.  At \(n=1200\), GWR had ISE 0.767 and the two-stage procedure had
ISE 1.236; the latter was substantially less accurate because
clustering the locations alone did not recover a partition that was optimal
for regression.

The same qualitative pattern was observed with unequal cluster sizes.  At
\(n=1200\), the proposed estimator had ISE 0.280, compared with 0.776 for the
global network, 0.836 for GWR, and 1.244 for the two-stage procedure.  Its test
misclassification rate decreased from 0.235 at \(n=300\) to 0.036 at
\(n=1200\), despite the balance penalty favoring equal occupancy.  This
indicates that, at the weight used here, the balance term prevented degenerate
solutions without forcing the fitted regions to have equal area.

The proposed estimator also performed best under weak separation, although
its advantage over GWR was smaller than in the strong-separation settings.
Its ISE was 0.235 at \(n=300\), compared with 0.276 for GWR, 0.829 for the
global network, and 0.980 for the two-stage procedure.  At \(n=1200\), the
corresponding values were 0.201, 0.212, 0.259, and 0.330.  Over the same range,
the proposed estimator's misclassification rate decreased from 0.158 to
0.045.  The narrower prediction gap in this setting is consistent with the
greater difficulty of identifying a partition when the regional regression
functions are less well separated.

The comparison with GWR distinguishes piecewise spatial heterogeneity from a
smoothly varying local regression surface, while the two-stage procedure
isolates the benefit of allowing the response to inform the partition.
Figure~\ref{fig:sim-partition} gives the full distribution of
partition error at \(n=1200\), and Figure~\ref{fig:sim-example} shows that the
estimated gate reproduced the curved boundary in a representative data set.

\begin{table}[t]
\centering
\caption{Experiment~1: integrated squared error by two-region scenario and
sample size. Entries are Monte Carlo means over 100 replications, with Monte
Carlo standard errors in parentheses.}
\label{tab:sim-ise}
\begin{threeparttable}
\begin{tabular}{llccc}
\toprule
Scenario & Method & \(n=300\) & \(n=600\) & \(n=1200\) \\
\midrule
Balanced, strong & Global DNN & 1.203 (0.012) & 0.775 (0.006) & 0.531 (0.005) \\
Balanced, strong & GWR & 0.905 (0.006) & 0.827 (0.005) & 0.767 (0.002) \\
Balanced, strong & \(K\)-means + DNN & 1.937 (0.083) & 1.585 (0.065) & 1.236 (0.052) \\
Balanced, strong & Proposed & 0.886 (0.034) & 0.425 (0.008) & 0.276 (0.009) \\
Balanced, strong & Oracle DNN & 0.490 (0.014) & 0.182 (0.005) & 0.084 (0.003) \\
\addlinespace
Unbalanced, strong & Global DNN & 2.021 (0.022) & 1.217 (0.014) & 0.776 (0.008) \\
Unbalanced, strong & GWR & 1.049 (0.007) & 0.911 (0.005) & 0.836 (0.003) \\
Unbalanced, strong & \(K\)-means + DNN & 2.273 (0.029) & 1.562 (0.016) & 1.244 (0.010) \\
Unbalanced, strong & Proposed & 0.842 (0.044) & 0.734 (0.034) & 0.280 (0.017) \\
Unbalanced, strong & Oracle DNN & 0.483 (0.013) & 0.196 (0.006) & 0.082 (0.002) \\
\addlinespace
Balanced, weak & Global DNN & 0.829 (0.010) & 0.473 (0.005) & 0.259 (0.002) \\
Balanced, weak & GWR & 0.276 (0.005) & 0.234 (0.003) & 0.212 (0.001) \\
Balanced, weak & \(K\)-means + DNN & 0.980 (0.023) & 0.586 (0.013) & 0.330 (0.011) \\
Balanced, weak & Proposed & 0.235 (0.023) & 0.210 (0.008) & 0.201 (0.003) \\
Balanced, weak & Oracle DNN & 0.666 (0.012) & 0.311 (0.005) & 0.137 (0.003) \\
\bottomrule
\end{tabular}
\begin{tablenotes}[flushleft]\footnotesize
\item The oracle estimator uses the true spatial labels and is not an
implementable competitor.
\end{tablenotes}
\end{threeparttable}
\end{table}

\begin{table}[t]
\centering
\caption{Experiment~1: recovery of the two-region spatial partition at
\(n=1200\). Entries are Monte Carlo means over 100 replications, with Monte
Carlo standard errors in parentheses.}
\label{tab:sim-partition}
\begin{tabular}{llcc}
\toprule
Scenario & Method & Misclassification & ARI \\
\midrule
Balanced, strong & \(K\)-means + DNN & 0.285 (0.016) & 0.290 (0.032) \\
Balanced, strong & Proposed & 0.034 (0.001) & 0.870 (0.005) \\
\addlinespace
Unbalanced, strong & \(K\)-means + DNN & 0.351 (0.005) & 0.098 (0.006) \\
Unbalanced, strong & Proposed & 0.036 (0.004) & 0.866 (0.013) \\
\addlinespace
Balanced, weak & \(K\)-means + DNN & 0.219 (0.016) & 0.415 (0.031) \\
Balanced, weak & Proposed & 0.045 (0.001) & 0.827 (0.004) \\
\bottomrule
\end{tabular}
\end{table}

\begin{figure}[t]
\centering
\includegraphics[width=\textwidth]{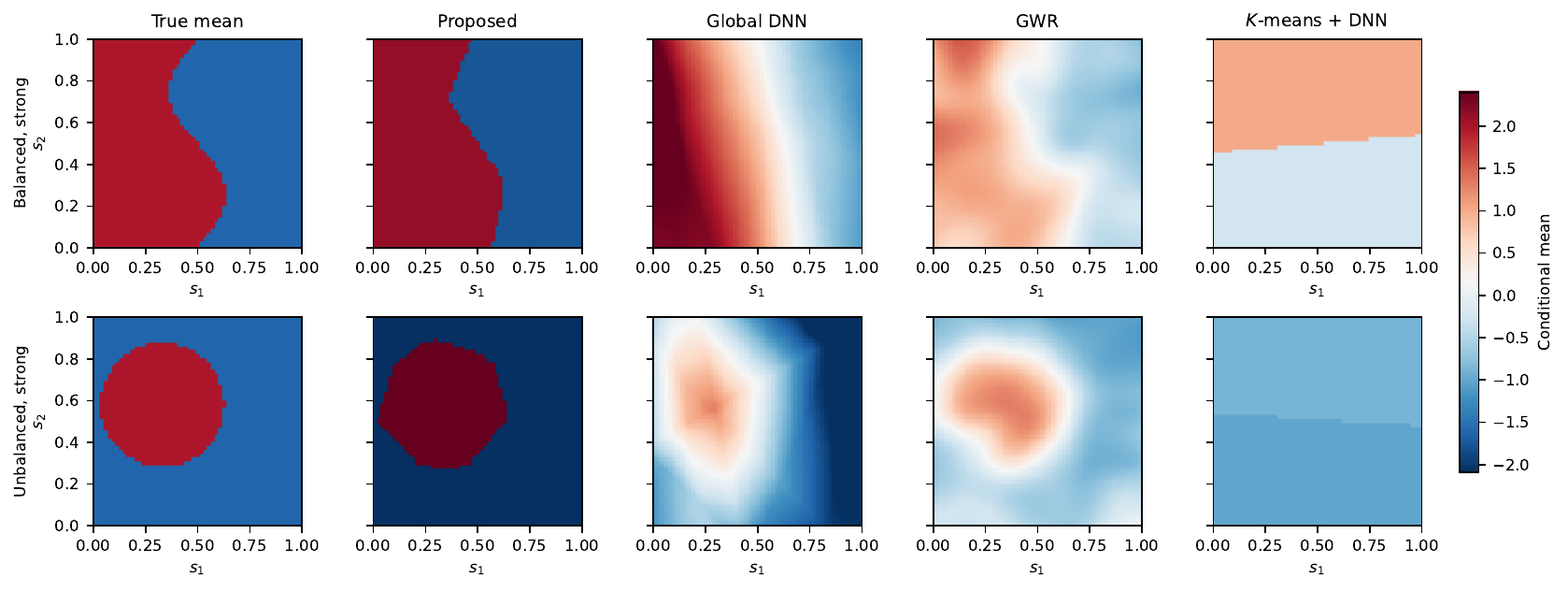}
\caption{Experiment~1: true and estimated conditional-mean surfaces in
prespecified representative data sets with \(n=1200\).  The first row is the
balanced strong-separation setting and the second is the unbalanced
strong-separation setting.  From left to right, the panels show the true mean,
the proposed estimator, the global DNN, GWR, and spatial \(K\)-means followed
by separate DNNs.  The nonspatial
covariates are fixed at
\(x^\star=(0.5,0.5,0.5,0.5,0.5)^\top\), and all panels use the same color
scale.  Data sets were generated using fixed seeds before model fitting and
were not selected from the Monte Carlo replications.}
\label{fig:sim-ise}
\end{figure}

\begin{figure}[t]
\centering
\includegraphics[width=0.72\textwidth]{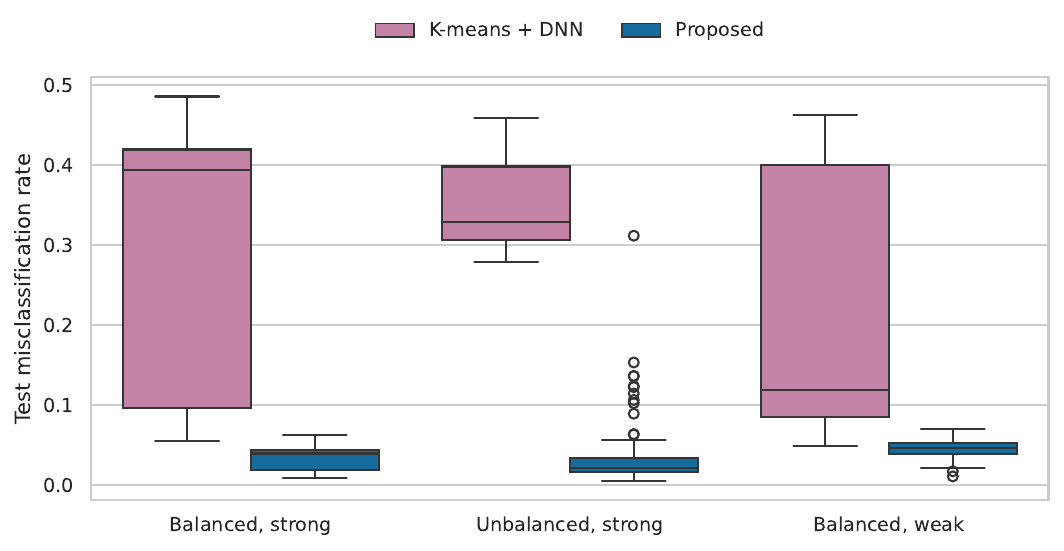}
\caption{Experiment~1: distribution over 100 replications of
label-permutation-adjusted test misclassification rates at \(n=1200\), shown
separately for the balanced strong, unbalanced strong, and balanced weak
settings.}
\label{fig:sim-partition}
\end{figure}

\begin{figure}[t]
\centering
\includegraphics[width=0.78\textwidth]{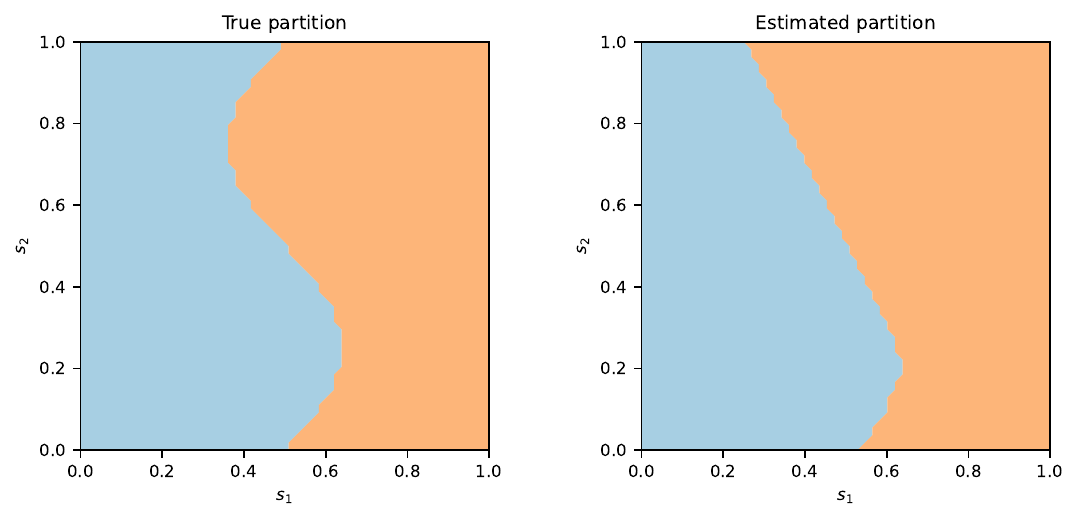}
\caption{Experiment~1: true and proposed-method partitions in a representative
balanced strong-separation data set with \(n=800\).  Cluster labels have been
permuted to match the true labels.}
\label{fig:sim-example}
\end{figure}

\FloatBarrier
\paragraph{Experiment 2: spatial dependence and multiple regions.}
\label{sec:sim-spatial-robustness}

The preceding experiments used two regions, independent covariates, and
independent errors.  We conducted a second set of experiments to examine three
features that arise routinely in spatial applications: more than two regions,
spatially correlated covariates, and nonuniform sampling.  The same
implementable comparison methods were retained to make performance directly
comparable across experiments.

Let $S=(S_1,S_2)\in[0,1]^2$.  The true partition contained three regions,
\[
 C_0(s)=
 \begin{cases}
  1, & s_1\le b_1(s_2),\\
  2, & b_1(s_2)<s_1\le b_2(s_2),\\
  3, & s_1>b_2(s_2),
 \end{cases}
\]
where
\[
 b_1(v)=0.30+0.065\sin(2\pi v),\qquad
 b_2(v)=0.68+0.075\sin(2\pi v+\pi/3).
\]
For $j=1,\ldots,5$, the covariates were generated as
\[
 X_j(s)=\sqrt{0.55}\,Z_j(s)+\sqrt{0.45}\,\eta_j,
 \qquad \eta_j\sim N(0,1),
\]
where the $Z_j$ were independent, mean-zero Gaussian processes with covariance
$\operatorname{cov}\{Z_j(s),Z_j(t)\}=\exp\{-\|s-t\|^2/(2\times0.22^2)\}$.
Thus nearby observations had similar, but not identical, covariate
distributions.  The regression mean was
\[
 f_0(s,x)=\alpha_{C_0(s)}+x^\top\beta_{C_0(s)}+r_{C_0(s)}(x),
\]
with
\[
 \alpha=(0.90,-0.50,0.25)^\top,\qquad
 \begin{pmatrix}\beta_1^\top\\ \beta_2^\top\\ \beta_3^\top\end{pmatrix}
 =
 \begin{pmatrix}
  1.15&-0.65& 0.35& 0.00& 0.55\\
 -0.90& 0.80& 0.65&-0.45& 0.10\\
  0.35& 0.20&-1.05& 0.85&-0.55
 \end{pmatrix}.
\]
The linear setting used $r_k\equiv0$.  In the nonlinear settings,
\begin{align*}
 r_1(x)&=0.95\sin(\pi x_1/2)+0.45x_2x_3,\\
 r_2(x)&=0.65(x_1^2-1)+0.80\sin(\pi x_4/2),\\
 r_3(x)&=0.75x_2x_5-0.70\cos(\pi x_3/2).
\end{align*}

We considered three scenarios.  In L-IID, the regression mean was linear,
locations were uniform on $[0,1]^2$, and the errors were independent
$N(0,0.6^2)$.  NL-IID replaced the linear regressions by the nonlinear functions
above.  In NL-SP, the training locations followed the mixture
\[
 0.55\,\operatorname{Unif}([0,1]^2)
 +0.45\{\operatorname{Beta}(2,5)\otimes\operatorname{Beta}(5,2)\},
\]
whereas test locations remained uniform.  Its errors were
\[
 \epsilon(s)=0.6\{\sqrt{0.55}\,U(s)+\sqrt{0.45}\,e(s)\},
\]
where $e(s)$ were independent standard normal variables and $U$ was a
mean-zero Gaussian process with squared-exponential range 0.16.  Independent
random Fourier-feature representations were used to generate the Gaussian processes.
The sample size was $n=600$ or 1200, and each design was replicated 100 times.

The proposed estimator used $K=3$, 32 hidden units in the assignment network,
and 20 hidden units in each cluster-specific regression network.  The spatial and balance penalty weights
were 0.08 and 0.002.  The deterministic temperature was annealed from 1.5 to
0.14 over 650 epochs.  In addition to a spatial $K$-means initialization, we
used a response-informed initialization obtained by fitting local ridge
regressions over $\max\{40,\lfloor n^{0.60}\rfloor\}$ nearest neighbors and
clustering the resulting coefficient estimates.

The comparison methods were the same as in Experiment~1: a global DNN fitted
to $(S,X)$, regularized GWR with a Gaussian kernel, and spatial $K$-means
followed by separate DNNs.  The oracle DNN fitted using the true labels was
again included only as a simulation reference.  GWR bandwidths were selected
on an independent validation subset from
$\{0.025,0.04,0.06,0.09,0.14,0.22\}$.  The same cluster-specific network
architecture was used by the proposed, two-stage, and oracle estimators.

Performance was evaluated at 2500 independent, uniformly distributed test
locations.  The primary measure was ISE for the structural regression mean.
We also computed permutation-adjusted misclassification error and ARI.
Numbers in parentheses below are Monte Carlo standard errors.

Table~\ref{tab:sim2-ise} summarizes prediction accuracy, and
Figure~\ref{fig:sim2-ise} illustrates the fitted conditional-mean surfaces in
representative nonlinear settings.  The proposed estimator improved
substantially as the sample size increased in all three settings.  It also had
the smallest ISE among the four implementable methods in every Experiment~2
configuration.  Table~\ref{tab:sim2-partition} compares its response-informed
partition with the partition obtained from locations alone by the two-stage
procedure, and Figure~\ref{fig:sim2-partition} illustrates the fitted
partitions in a representative NL-SP data set.

The nonlinear settings more clearly separated the methods.  In NL-IID with
$n=1200$, the proposed ISE was 0.344, compared with 0.731 for GWR, 1.028 for
the global DNN, and 2.016 for the two-stage method.  Under preferential
sampling and spatially correlated errors, the corresponding values were
0.558, 0.897, 1.147, and 2.272.  Thus spatial
dependence and nonuniform sampling increased the gap to
the oracle, but did not remove the benefit of learning a piecewise nonlinear
regression surface.

Together, the results support learning the spatial assignment jointly with
nonlinear regional regressions when the response surface changes sharply
across curved boundaries.  The comparison with the two-stage method shows
that spatial clustering alone was insufficient in these designs, while the
comparison with GWR shows the benefit of accommodating discontinuities rather
than imposing a smoothly varying local regression surface.

\begin{table}[t]
\centering
\caption{Experiment~2: integrated squared error in the three-region
simulation. Entries are Monte Carlo means over 100 replications, with Monte
Carlo standard errors in parentheses.}
\label{tab:sim2-ise}
\begin{threeparttable}
\centering
\begin{adjustbox}{max width=\textwidth, center}
\begin{tabular}{llcc}
\toprule
Scenario & Method & $n=600$ & $n=1200$\\
\midrule
L-IID & Global DNN & 1.205 (0.033) & 0.737 (0.019)\\
      & GWR & 0.525 (0.018) & 0.382 (0.012)\\
      & \(K\)-means + DNN & 2.084 (0.069) & 1.560 (0.050)\\
      & Proposed & 0.480 (0.017) & 0.263 (0.010)\\
      & Oracle DNN & 0.138 (0.003) & 0.060 (0.001)\\
\addlinespace
NL-IID & Global DNN & 1.469 (0.028) & 1.028 (0.024)\\
       & GWR & 0.883 (0.014) & 0.731 (0.012)\\
       & \(K\)-means + DNN & 2.504 (0.065) & 2.016 (0.051)\\
       & Proposed & 0.633 (0.020) & 0.344 (0.010)\\
       & Oracle DNN & 0.209 (0.003) & 0.107 (0.002)\\
\addlinespace
NL-SP & Global DNN & 1.708 (0.037) & 1.147 (0.027)\\
      & GWR & 1.088 (0.019) & 0.897 (0.014)\\
      & \(K\)-means + DNN & 3.063 (0.077) & 2.272 (0.061)\\
      & Proposed & 0.841 (0.032) & 0.558 (0.027)\\
      & Oracle DNN & 0.290 (0.008) & 0.198 (0.006)\\
\bottomrule
\end{tabular}
\end{adjustbox}
\begin{tablenotes}
\footnotesize
\item L-IID: linear regressions and independent errors; NL-IID: nonlinear regressions
and independent errors; NL-SP: nonlinear regressions, preferential sampling, and
spatially correlated errors.  The oracle uses the true labels.
\end{tablenotes}
\end{threeparttable}
\end{table}

\begin{table}[t]
\centering
\caption{Experiment~2: recovery of the three-region partition at \(n=1200\).
Entries are Monte Carlo means over 100 replications, with Monte Carlo standard
errors in parentheses.}
\label{tab:sim2-partition}
\begin{tabular}{llcc}
\toprule
Scenario & Method & Misclassification & ARI\\
\midrule
L-IID & \(K\)-means + DNN & 0.385 (0.003) & 0.276 (0.007)\\
      & Proposed & 0.027 (0.002) & 0.922 (0.004)\\
\addlinespace
NL-IID & \(K\)-means + DNN & 0.385 (0.003) & 0.277 (0.007)\\
       & Proposed & 0.026 (0.001) & 0.923 (0.004)\\
\addlinespace
NL-SP & \(K\)-means + DNN & 0.384 (0.005) & 0.292 (0.009)\\
      & Proposed & 0.049 (0.006) & 0.870 (0.009)\\
\bottomrule
\end{tabular}
\end{table}

\begin{figure}[t]
\centering
\includegraphics[width=\textwidth]{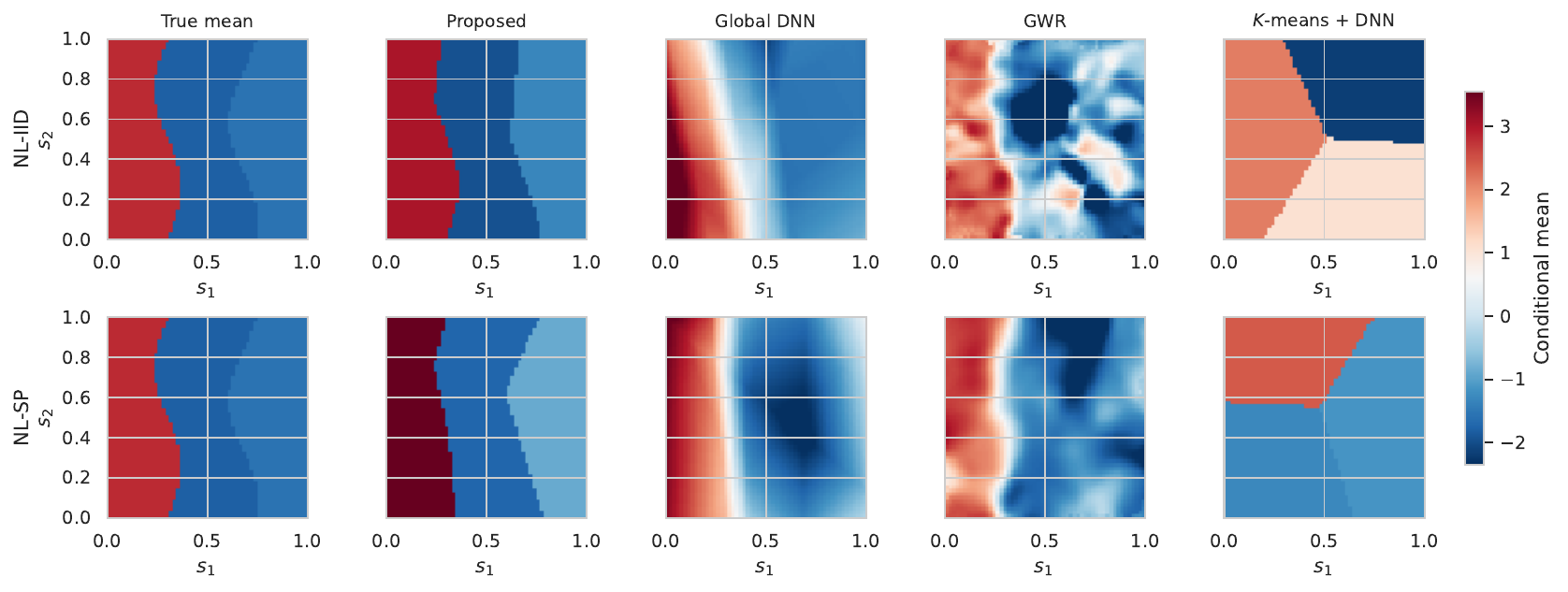}
\caption{Experiment~2: true and estimated conditional-mean surfaces in
prespecified representative data sets with \(n=1200\).  The first row is
NL-IID; the second is NL-SP, with preferential training locations and
spatially correlated errors.  From left to right, the panels show the true
mean, the proposed estimator, the global DNN, GWR, and spatial $K$-means
followed by separate DNNs.  The nonspatial covariates are fixed at
\(x^\star=(0.5,-0.5,0.5,-0.5,0.5)^\top\), and all panels use the same color
scale.  Fixed seeds were specified before fitting; the displayed data sets
were not selected from the Monte Carlo replications.}
\label{fig:sim2-ise}
\end{figure}

\begin{figure}[t]
\centering
\includegraphics[width=0.94\textwidth]{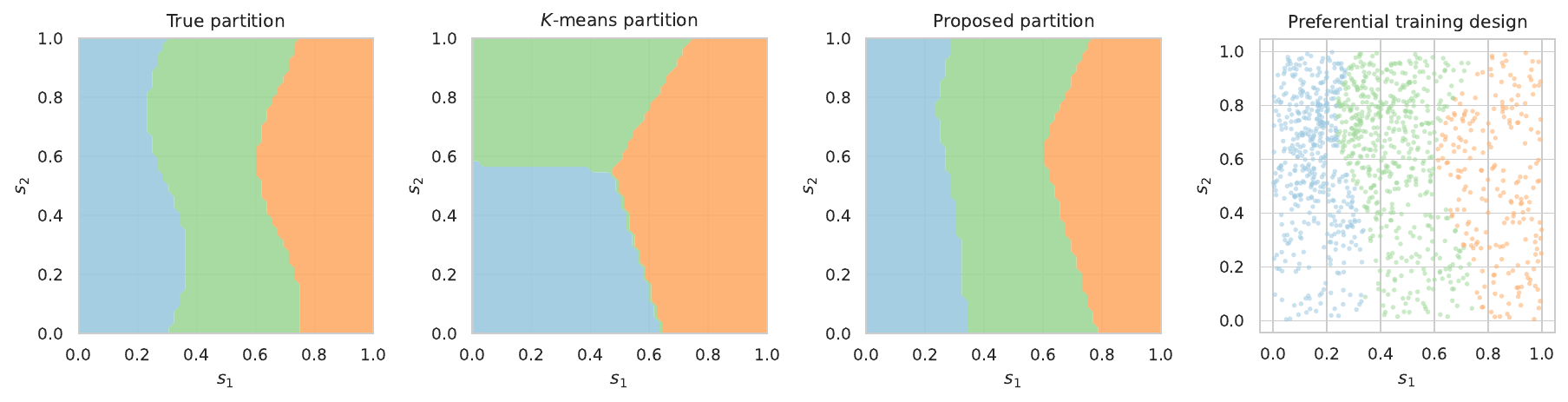}
\caption{Experiment~2: true and estimated partitions in the prespecified
representative NL-SP data set, together with the preferential training
locations.  The panels identify the displayed method, cluster labels are
permutation adjusted, and the data set was generated with a fixed seed rather
than selected from the Monte Carlo replications.}
\label{fig:sim2-partition}
\end{figure}

\FloatBarrier
\paragraph{Selection of the number of regions.}\label{sec:k-selection}
We separately assessed the cross-validation recommendation in
Section~\ref{sec:estimation}.  For each nonlinear scenario, $K$ was selected from
$\{2,3,4,5\}$ by three-fold cross-validation.  We used the one-standard-error
rule, choosing the smallest $K$ whose mean validation loss was within one
standard error of the minimum.  Candidate models used one response-informed
initialization and 360 epochs; after selection, the model was refitted to the
full training sample using the two initializations and 650 epochs described
above.  This experiment also used 100 replications.

Table~\ref{tab:sim2-k} and Figure~\ref{fig:sim2-k} show that prediction-based
cross-validation did not consistently recover the structural value $K_0=3$.
In NL-IID, the correct selection proportion decreased from 0.71 at $n=600$ to
0.52 at $n=1200$.  In NL-SP, it decreased from 0.51 to 0.29, with $K=4$ or 5
selected in 71\% of the larger-sample replications.  This behavior is not
surprising: a model with $K>K_0$ can split a true region while retaining nearly
the same regression surface, and the one-standard-error tolerance contracts
as $n$ increases.  Prediction cross-validation therefore selects predictive
complexity, not necessarily the minimal structural partition.  Unless a
separate complexity or stability criterion is imposed, $K$ should be treated
as prespecified when structural interpretation is the objective.

\begin{table}[t]
\centering
\caption{Experiment~2: selection of \(K\) by three-fold cross-validation with
the one-standard-error rule.  Selection columns give percentages over 100
replications in the NL-IID and NL-SP settings.}
\label{tab:sim2-k}
\begin{tabular}{llrrrrc}
\toprule
Scenario & $n$ & $K=2$ & $K=3$ & $K=4$ & $K=5$ & Selected-model ISE\\
\midrule
NL-IID & 600  & 2 & 71 & 21 & 6  & 0.680 (0.022)\\
       & 1200 & 0 & 52 & 35 & 13 & 0.369 (0.011)\\
NL-SP  & 600  & 5 & 51 & 30 & 14 & 0.863 (0.029)\\
       & 1200 & 0 & 29 & 44 & 27 & 0.546 (0.019)\\
\bottomrule
\end{tabular}
\end{table}

\begin{figure}[H]
\centering
\includegraphics[width=0.88\textwidth]{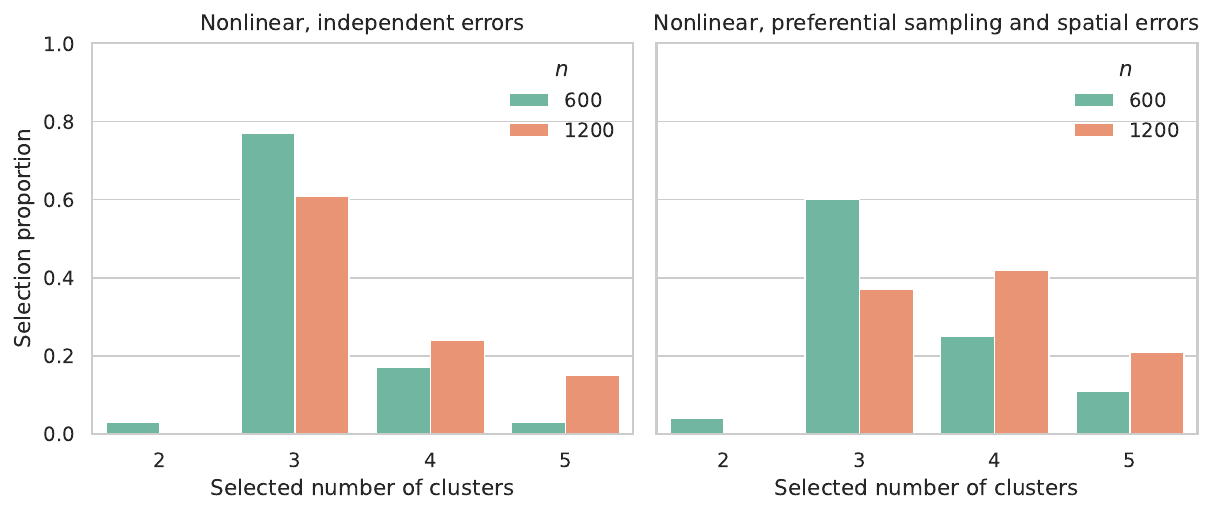}
\caption{Experiment~2: empirical distribution of the number of regions
selected by three-fold cross-validation with the one-standard-error rule in
the NL-IID and NL-SP settings.  The vertical reference indicates the true
value \(K_0=3\).}
\label{fig:sim2-k}
\end{figure}
\FloatBarrier
\section{King County house sales}\label{sec:king-county}

We applied the proposed method to residential sales in King County, Washington.
The data contain 21,613 transactions recorded between May 2014 and May 2015,
together with property characteristics and geographic coordinates
\citep{geoda2020kingcounty}.  The response was log sale price.  The covariate
vector included bedrooms, bathrooms, floors, waterfront status, view,
condition, construction grade, living area, lot area, above-ground and
basement area, age at sale, renovation status, sale date, and the living and
lot areas of nearby properties.  Area variables were transformed by
$\log(1+x)$.  Within each training sample, continuous covariates were
winsorized at the 0.5th and 99.5th percentiles and standardized.  Longitude
and latitude were supplied only to the assignment network, except in the
global neural-network comparator, which received both location and the
property covariates.

Our primary purpose in this analysis was to assess whether the method could
produce a stable and interpretable regional summary, while retaining a fair
comparison with methods designed primarily for prediction.  We prespecified
three regions to obtain a low-dimensional description of the county's major
housing submarkets.  This choice also avoids
interpreting prediction-based selection as consistent estimation of a unique
structural partition; as observed in Section~\ref{sec:k-selection},
cross-validation can favor
additional clusters that split a region without materially changing its
regression surface.  A sensitivity analysis that selected $K$ from
$\{2,3,4,5\}$ within each outer training sample chose $(5,5,2,5,5)$ across the
five folds, but the resulting five-region full-data partition was appreciably
less stable under subsampling than the three-region partition.

The assignment network had one hidden layer with 24 ReLU units, and each
cluster-specific regression network had one hidden layer with 28 ReLU units.  The deterministic softmax
temperature was annealed from 1.25 to 0.16.  We used an eight-nearest-neighbor
graph, with spatial and balance penalty weights 0.005 and 0.002, respectively.
For each training sample, spatial $K$-means and response-informed local-ridge
initializations were compared on an internal spatial validation set.  After
selecting the initialization, the model was refitted to the complete outer
training sample.

Prediction was assessed by five-fold spatial block cross-validation.  We first
formed 100 compact blocks by clustering the coordinates and then allocated
whole blocks to five folds with approximately equal sample sizes.  Thus,
nearby sales within the same block could not be divided between training and
test samples.  The comparison methods were the same three implementable
comparators used in both simulation experiments: a global DNN fitted to
$(S,X)$, regularized GWR, and spatial $K$-means followed by separate DNNs.
GWR neighborhood size was chosen from
$\{128,256,512\}$ within each outer training sample.  Table~\ref{tab:king-pred}
reports means across the five held-out folds, with standard errors computed
from the fold-specific values.

\begin{table}[H]
\centering
\caption{Spatially blocked prediction of King County log sale price. Entries
are means across five outer folds, with standard errors in parentheses. Dollar
MAE is reported in thousands of dollars.}
\label{tab:king-pred}
\resizebox{\textwidth}{!}{%
\begin{tabular}{lccrrrr}
\toprule
Method & Log RMSE & Log MAE & $R^2$ & Dollar MAE & MAPE & Moran's $I$ \\
\midrule
Global DNN
  & 0.189 (0.003) & 0.139 (0.002) & 0.864 (0.014)
  & 78.1 (4.3) & 0.142 (0.002) & 0.233 (0.023) \\
GWR
  & 0.196 (0.005) & 0.144 (0.003) & 0.854 (0.013)
  & 81.9 (5.0) & 0.148 (0.003) & 0.270 (0.034) \\
$K$-means + DNN
  & 0.237 (0.009) & 0.180 (0.006) & 0.786 (0.022)
  & 99.9 (7.2) & 0.184 (0.008) & 0.384 (0.043) \\
Proposed
  & 0.248 (0.014) & 0.186 (0.011) & 0.756 (0.045)
  & 104.5 (6.3) & 0.190 (0.010) & 0.460 (0.053) \\
\bottomrule
\end{tabular}
}
\end{table}

The global DNN had the smallest prediction error, followed by GWR.  The
proposed estimator had mean log-scale RMSE 0.248, compared with 0.189 for the
global DNN and 0.196 for GWR.  It also did not improve on the two-stage
$K$-means estimator in this application.  These results do not support a predictive advantage for a hard
piecewise regression representation on this data set.  The residual Moran's
$I$ of 0.460, compared with 0.233 for the global DNN, indicates that a smooth
within-region location effect remains after the fitted partition is taken into
account.

The fitted partition was nevertheless geographically coherent and highly
reproducible (Figure~\ref{fig:king-maps}).  The eastern/northeastern region
contained 5,644 sales and had median price \$575,000; the western/northern
region contained 9,275 sales and had median price \$520,000; and the southern
region contained 6,694 sales and had median price \$293,000.  The average
maximum assignment probabilities were 0.920, 0.935, and 0.922, respectively.
Across five fits to independent 85\% subsamples, the adjusted Rand index
relative to the full-data partition averaged 0.962 and ranged from 0.929 to
0.986.  Hence the three broad regions were not an artifact of a single
initialization or a small subset of transactions.

\begin{figure}[t]
\centering
\includegraphics[width=0.96\textwidth]{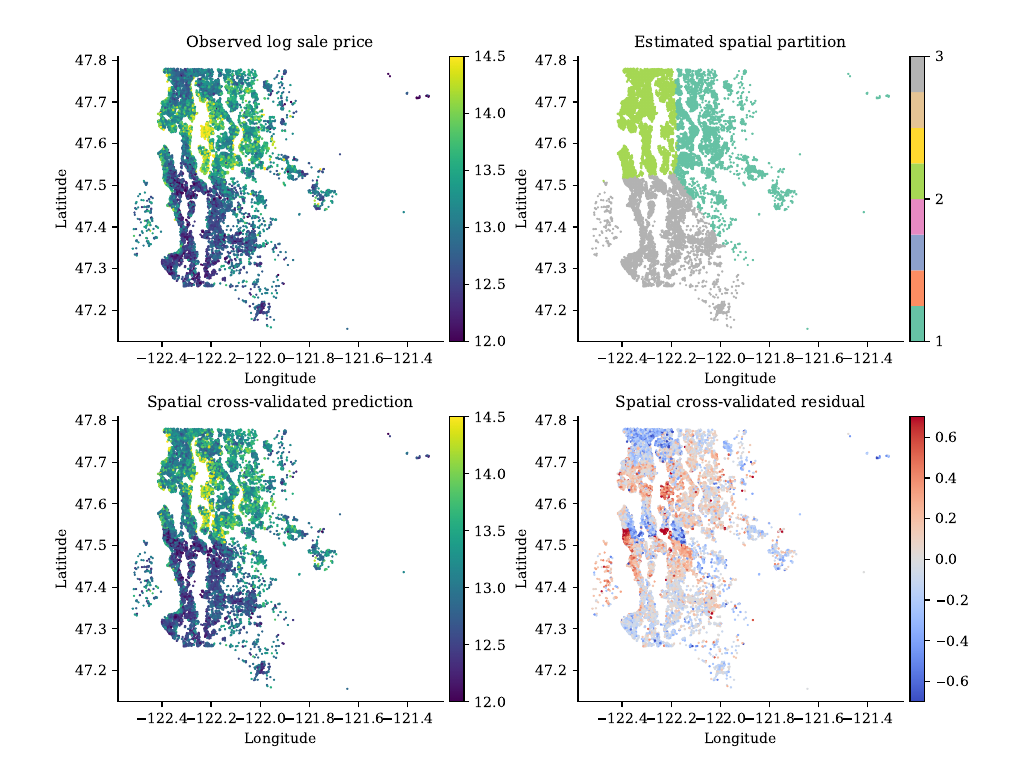}
\caption{Observed log sale prices, the full-data three-region partition,
out-of-fold predictions from the proposed method, and out-of-fold residuals.
Cluster labels are arbitrary.  Whole spatial blocks, rather than individual
transactions, were held out during cross-validation.}
\label{fig:king-maps}
\end{figure}

The cluster-specific regression networks also gave different descriptions of property
value.  Figure~\ref{fig:king-curves} varies living area while holding the other
covariates at their within-region medians.  At comparable living area, fitted
prices were substantially lower in the southern region.  Average local
sensitivities provided a second summary of heterogeneity
(Figure~\ref{fig:king-sensitivity}).  Construction grade had its largest
effect in the western/northern region, whereas waterfront status and view had
their largest effects in the southern region.  Lot area had a positive average
effect in the eastern and southern regions but a slightly negative effect in
the western/northern region after the other property characteristics were held
fixed.

\begin{figure}[t]
\centering
\includegraphics[width=0.70\textwidth]{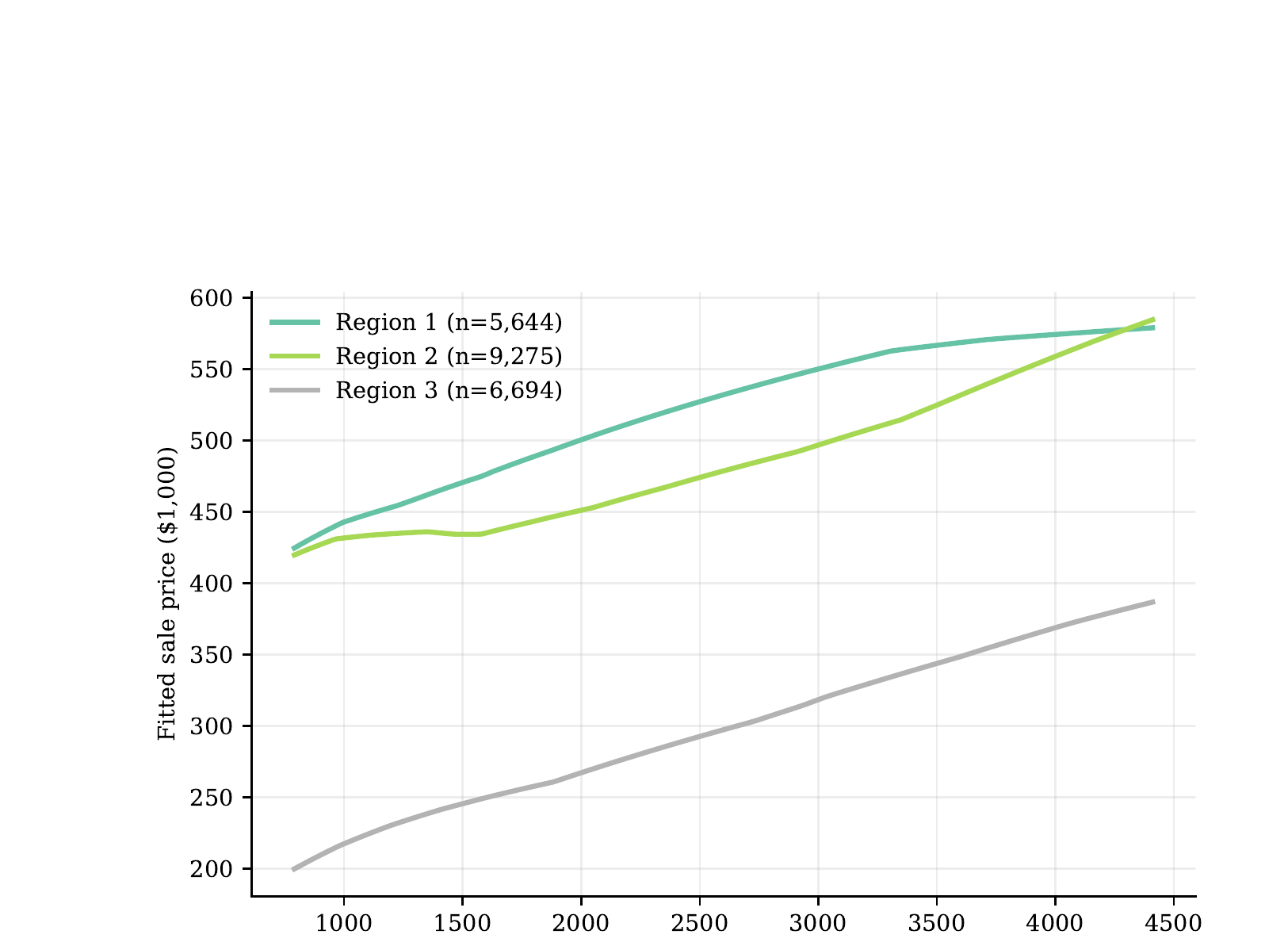}
\caption{Cluster-specific fitted sale price as a function of living area.  All
other property covariates are fixed at their within-region medians.}
\label{fig:king-curves}
\end{figure}

\begin{figure}[t]
\centering
\includegraphics[width=0.70\textwidth]{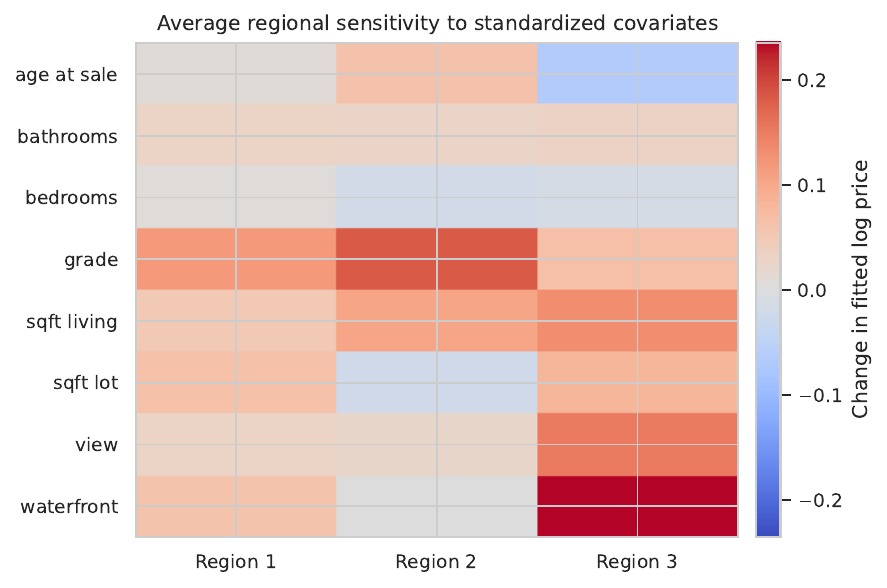}
\caption{Average within-region derivatives of the fitted regional regressions with respect
to standardized property covariates.  Positive values indicate an increase in
fitted log price.}
\label{fig:king-sensitivity}
\end{figure}

This application illustrates both the use and the limitation of the proposed
model.  It produced a stable, readily summarized partition and heterogeneous
within-region response functions, but the continuous spatial variation in
house prices was better predicted by methods that allowed location to enter
the regression surface directly.  In applications where prediction is the
sole objective, the spatially clustered model should therefore be compared
with a sufficiently flexible global spatial learner.  Its principal advantage
in the present analysis is the explicit and reproducible submarket
representation rather than lower prediction error.
\clearpage
\section{Discussion}\label{sec:discussion}

The proposed model is intended for settings in which regression
relationships are approximately homogeneous within regions and may change
abruptly at their boundaries.  It is not a general replacement for smooth
spatial regression.  This distinction was visible in the numerical work.  The
method performed well in the piecewise nonlinear simulations, whereas the
King County data favored a global spatial neural network for prediction.  The
three-region housing fit was useful for a different reason: it gave a stable
and readily summarized description of geographic submarkets.

The theoretical results also reflect this intended setting.  Separation and
covariate overlap identify the partition and cluster-specific functions up to
label permutation, and the prediction bound isolates the cost of assigning a
new location to the wrong region.  The oracle-rate result shows that estimating
the partition need not alter the leading regression rate when assignment error
decays sufficiently quickly.  In the revised formulation, this requires
\(a_n^\gamma=O\{n^{-2\beta/(2\beta+p)}\}\), where \(a_n\) is the
boundary-localization rate.  This separates a geometric requirement on the
estimated partition from the nonparametric rate of the regional regressions.
When the condition fails, assignment error may dominate even if the
regression networks are accurately estimated.

Several extensions are natural. First, the current theory takes
\(K=K_0\), while in practice \(K\) is selected by cross-validation.
Developing a consistent data-adaptive procedure for \(K\) would be
useful. Second, the model assumes homogeneous noise variance; allowing
cluster-specific variances is straightforward in the model but would
require modifications to the theory. Third, generalized outcomes such as
binary or count responses can be handled by replacing the squared-error
loss with an appropriate negative log-likelihood.
Fourth, Assumption~\ref{ass:support} requires the conditional covariate
distribution to have a density uniformly bounded away from zero and
infinity across all spatial locations. This rules out settings where
different spatial regions have substantially different covariate
distributions. Relaxing this to a weaker overlap condition, or allowing
the support of \(\nu\) to vary with the cluster, would be a useful
direction for future work.

\clearpage
\appendix
\section{Proofs of the main results}\label{app:proofs}

\begin{proof}[Proof of Theorem~\ref{thm:identifiability}]
Using the model and \(E(\epsilon\mid S,X)=0\),
\[
\begin{aligned}
  R(q,\phi)
  &=
  E\left\{f_{0,C_0(S)}(X)+\epsilon-\phi_{q(S)}(X)\right\}^2  \\
  &=
  E\left\{f_{0,C_0(S)}(X)-\phi_{q(S)}(X)\right\}^2
  +E(\epsilon^2).
\end{aligned}
\]
Therefore
\begin{equation}\label{eq:excess_risk}
  R(q,\phi)-R_0
  =
  E\left\{f_{0,C_0(S)}(X)-\phi_{q(S)}(X)\right\}^2
  \geq 0 .
\end{equation}
The oracle pair \((C_0,f_0)\) attains zero excess risk.  Hence any
population minimizer must satisfy
\begin{equation}\label{eq:zero_excess}
  f_{0,C_0(S)}(X)=\phi_{q(S)}(X)
\end{equation}
almost surely.

Let
\[
  A_a=\{s:q(s)=a\},
  \qquad a=1,\ldots,K_0.
\]
Fix a fitted label \(a\).  Suppose \(A_a\) has positive-probability overlap
with a true cluster \(\mathcal C_k\).  On the event
\(\{S\in A_a\cap\mathcal C_k\}\), equation \eqref{eq:zero_excess} gives
\[
  \phi_a(X)=f_{0,k}(X)
\]
almost surely.  Equivalently,
\[
  0
  =
  E\left[
    \{\phi_a(X)-f_{0,k}(X)\}^2
    1\{S\in A_a\cap\mathcal C_k\}
  \right].
\]
Writing this expectation by conditioning on \(S=s\),
\[
  0
  =
  \int_{A_a\cap\mathcal C_k}
  \int_{\mathcal X}
  \{\phi_a(x)-f_{0,k}(x)\}^2
  p_s(x)\,d\nu(x)\,dP_S(s).
\]
Because \(p_s(x)\geq m_x>0\) for \(\nu\)-almost every \(x\) and almost every
\(s\), and because \(P(S\in A_a\cap\mathcal C_k)>0\), it follows that
\[
  \phi_a=f_{0,k}
\]
\(\nu\)-almost everywhere.

Now suppose the same fitted label \(a\) also overlaps another true cluster
\(\mathcal C_l\), with \(l\neq k\), in positive probability.  The same
argument gives
\[
  \phi_a=f_{0,l}
\]
\(\nu\)-almost everywhere.  Thus \(f_{0,k}=f_{0,l}\) in \(L^2(\nu)\), which
contradicts Assumption~\ref{ass:separation}.  Therefore each fitted label
can overlap, up to null sets, with at most one true cluster.

It remains to show that every true cluster is represented by some fitted
label.  Since the sets \(A_1,\ldots,A_{K_0}\) form a partition of
\(\mathcal S\),
\[
  \mathcal C_k
  =
  \bigcup_{a=1}^{K_0}(A_a\cap\mathcal C_k).
\]
By Assumption~\ref{ass:size}, \(P(S\in\mathcal C_k)=\mu_k>0\).  Hence at
least one set \(A_a\cap\mathcal C_k\) must have positive probability.

There are \(K_0\) fitted labels and \(K_0\) true clusters.  Each fitted
label overlaps at most one true cluster, and every true cluster is overlapped
by at least one fitted label.  Thus the correspondence is one-to-one.  Hence
there exists a permutation \(\sigma\) such that
\[
  q(s)=\sigma\{C_0(s)\}
\]
almost surely.  On each matched cluster, equation \eqref{eq:zero_excess}
also gives
\[
  \phi_{\sigma(k)}=f_{0,k}
\]
\(\nu\)-almost everywhere.  This proves the theorem.
\end{proof}

\begin{proof}[Proof of Theorem~\ref{thm:partition}]
Let
\[
  D_n=\delta_\partial(\widehat C,C_0).
\]
By the definition in \eqref{eq:boundary_localization}, for every
\(\eta>0\) there is a permutation \(\sigma_\eta\) such that
\[
 \{s:\sigma_\eta(\widehat C(s))\neq C_0(s)\}
 \subseteq \mathcal B_0^{D_n+\eta}.
\]
Conditional on the training sample, \(D_n\) is fixed and the test location
\(S\) is independent of that sample.  Assumption~\ref{ass:margin} therefore
gives, whenever \(D_n+\eta<t_0\),
\[
\begin{aligned}
 \inf_{\sigma\in\mathfrak S_{K_0}}
 P_S\{\sigma(\widehat C(S))\neq C_0(S)\mid\mathcal D_n\}
 &\leq P_S(S\in\mathcal B_0^{D_n+\eta})\\
 &\leq c_0(D_n+\eta)^\gamma .
\end{aligned}
\]
The left side is at most one, so the same conclusion up to a fixed constant
holds when \(D_n+\eta\geq t_0\).  Letting \(\eta\downarrow0\), taking
expectations, and applying Assumption~\ref{ass:assignment_rate} yields
\[
 E\left[
  \inf_{\sigma\in\mathfrak S_{K_0}}
  P_S\{\sigma(\widehat C(S))\neq C_0(S)\mid\mathcal D_n\}
 \right]
 \lesssim E(D_n^\gamma)
 \lesssim a_n^\gamma ,
\]
which proves the result.
\end{proof}

\begin{proof}[Proof of Proposition~\ref{prop:oracle}]
Let
\[
  n_k=\#\{i:S_i\in\mathcal C_k\}.
\]
By Assumption~\ref{ass:size},
\[
  E(n_k)=n\mu_k\geq n\mu_0.
\]
Standard concentration gives
\[
  P(n_k<n\mu_0/2)\leq \exp(-cn)
\]
for some constant \(c>0\).

The conditional distribution of \(X\) given \(S\in\mathcal C_k\) has density
\[
  p_k(x)
  =
  \frac{1}{\mu_k}
  \int_{\mathcal C_k}p_s(x)\,dP_S(s)
\]
with respect to \(\nu\).  By Assumption~\ref{ass:support},
\[
  m_x\leq p_k(x)\leq M_x
\]
for \(\nu\)-almost every \(x\).  Hence the norm \(\|\cdot\|_{k,2}\) is
equivalent to the \(L^2(\nu)\) norm on this cluster.

Work on the event \(n_k\geq n\mu_0/2\).  Let \(\mathcal F_n\) be a bounded
ReLU network class with \(N_n\) effective parameters and depth of order
\(\log n\).  Standard approximation results for H\"older functions on a
compact subset of \(\mathbb R^p\)
\citep{yarotsky2017error,schmidthieber2020nonparametric} imply that there exists
\(\widetilde f_k\in\mathcal F_n\) such that
\[
  \|\widetilde f_k-f_{0,k}\|_\infty
  \leq
  C N_n^{-\beta/p}.
\]
Therefore
\[
  \|\widetilde f_k-f_{0,k}\|_{k,2}^2
  \leq
  C N_n^{-2\beta/p}.
\]
A standard least-squares oracle inequality for bounded finite-entropy
network classes gives
\[
  E\|\widehat\Phi_k^{\rm or}-f_{0,k}\|_{k,2}^2
  \leq
  C\inf_{g\in\mathcal F_n}\|g-f_{0,k}\|_{k,2}^2
  +
  C\frac{N_n\log n}{n_k}.
\]
On \(n_k\geq n\mu_0/2\),
\[
  E\|\widehat\Phi_k^{\rm or}-f_{0,k}\|_{k,2}^2
  \leq
  C N_n^{-2\beta/p}
  +
  C\frac{N_n\log n}{n}.
\]
Choosing
\[
  N_n\asymp n^{p/(2\beta+p)}
\]
balances the two terms and yields
\[
  E\|\widehat\Phi_k^{\rm or}-f_{0,k}\|_{k,2}^2
  \lesssim
  n^{-2\beta/(2\beta+p)}
\]
up to logarithmic factors.  The event \(n_k<n\mu_0/2\) is negligible because
its probability is exponentially small and the functions are uniformly
bounded.
\end{proof}

\begin{proof}[Proof of Theorem~\ref{thm:risk}]
Write
\[
  E\{\widehat f(S,X)-f_0(S,X)\}^2
  =
  E_1+E_2,
\]
where
\[
  E_1
  =
  E\left[
    \{\widehat f(S,X)-f_0(S,X)\}^2
    1\{\widetilde C(S)=C_0(S)\}
  \right],
\]
and
\[
  E_2
  =
  E\left[
    \{\widehat f(S,X)-f_0(S,X)\}^2
    1\{\widetilde C(S)\neq C_0(S)\}
  \right].
\]

On the event \(\widetilde C(S)=C_0(S)=k\),
\[
  \widehat f(S,X)-f_0(S,X)
  =
  \widetilde\Phi_k(X)-f_{0,k}(X).
\]
Therefore
\[
\begin{aligned}
  E_1
  &\leq
  \sum_{k=1}^{K_0}
  E\left[
    \{\widetilde\Phi_k(X)-f_{0,k}(X)\}^2
    1\{S\in\mathcal C_k\}
  \right]  \\
  &=
  \sum_{k=1}^{K_0}
  \mu_k E\|\widetilde\Phi_k-f_{0,k}\|_{k,2}^2.
\end{aligned}
\]
The last equality follows by conditioning on the training data and using the
independence of the test point.

For \(E_2\), boundedness gives
\[
  |\widehat f(S,X)-f_0(S,X)|
  \leq
  |\widehat f(S,X)|+|f_0(S,X)|
  \leq 2M.
\]
Thus
\[
  E_2
  \leq
  4M^2P\{\widetilde C(S)\neq C_0(S)\}.
\]
Combining the two bounds proves the theorem.
\end{proof}

\clearpage
\begingroup
\small
\bibliographystyle{plainnat}
\bibliography{references}
\endgroup

\end{document}